\documentclass{article}
\usepackage{iclr2027_conference}
\usepackage{iftex}
\ifPDFTeX
  \usepackage[T1]{fontenc}
  \usepackage{times}
\else
  \usepackage{fontspec}
\fi
\usepackage{amsmath,amsthm}
\usepackage{graphicx}
\usepackage{booktabs,tabularx}
\usepackage{url}
\usepackage{hyperref}
\newcommand{\byte}{\textsc{Byte}}
\newcommand{\subword}{\textsc{Subword}}
\newcommand{\bpb}{\mathrm{bpb}}
\newtheorem{proposition}{Proposition}


\title{Byte Language Models: Scaling, Emergent Abstractions, and Information Allocation}
\author{Jie Wang\textsuperscript{1}\thanks{Equal contribution.} ,
Shiwei Luo\textsuperscript{1}\footnotemark[1] ,
Qi Zhang\textsuperscript{2} ,
Yuanbin Wu\textsuperscript{1} \\
\textsuperscript{1}School of Computer Science, East China Normal University, Shanghai, China \\
\textsuperscript{2}School of Computer Science, Fudan University, Shanghai, China \\
\texttt{jiewang.cs@stu.ecnu.edu.cn} 
}
\iclrfinalcopy

\begin{document}
\maketitle
\lhead{Preprint}

\begin{abstract}

Tokenizer-free language models remove the inductive bias of fixed tokenizers by modeling text directly as bytes, but the resulting longer sequences substantially increase computation and eliminate explicit text abstractions. We ask whether this additional computation can be useful, and whether standard Transformers can learn the abstractions that tokenization provides. We study these questions on Transformers without specialized tokenization-related architectures. With token-superposition training and hash embeddings, byte Transformers consistently outperform subword Transformers as model size scales. We further find that byte Transformers build local text abstractions as external tokenizers: a set of segmentation-like positions are used to collect local context representations, and restricting up to $25\%$ of intermediate layers to these local representations preserves downstream performance. Finally, these learned structures induce highly non-uniform generation difficulty, with uncertainty concentrated near local structure boundaries; exploiting them for speculative decoding yields $3.4\times$ more accepted tokens than in subword Transformers.

\end{abstract}

\section{Introduction}
\label{sec:introduction}

Tokenizer-free language models, or byte language models, aim to remove externally imposed tokenizers by modeling text directly as byte sequences \citep{xue2022byt5,yu2023megabyte,slagle2024spacebyte,tsiamas-etal-2025-improving,shah2026dnahnet}. This formulation provides a universal input space and a new joint learning setting. However, it also introduces two main challenges. First, byte sequences are longer than subword sequences. If a typical subword contains K (usually around 4) bytes, a byte model must process K times more sequence positions. Second, prior studies suggest that effective language modeling benefits from prior abstractions over raw text. Directly modeling bytes can therefore underperform tokenized models, despite having a more general input representation. 

Recent works (BLT \citealp{pagnoni2025blt}, H-Net \citealp{hwang2025hnet}, ByteFlow \citealp{deng2026byteflow}) develop new network architectures with built-in tokenization mechanisms. For example, in BLT, local encoder layers first encode byte streams into variable-sized chunks, then global layers operate on the chunks, and local decoder layers decode chunks back to bytes. While these architectures demonstrate the promise of joint tokenization, they often need specialized training and optimization strategies, making it less clear how much of the observed gains are from the joint formulation.  In this work, we retain the \emph{flat} Transformer architecture (without the explicit local-global-local hierarchy) and investigate the two challenges underlying byte-level language modeling: \emph{Can the additional computation caused by longer sequences be beneficial, and how does a Transformer recover useful text abstractions directly from bytes?}

We study the scaling behavior of byte Transformers, and the main findings are: 1) with two modeling ingredients (token-superposition training \citep{peng2026tst} and hash embeddings), byte Transformers consistently outperform subword Transformers when scaling the number of parameters (Figure \ref{fig:isoflops-frontiers}a). 2) Byte models remain effective as training data continues to scale. Given a fixed FLOPs budget (Figure \ref{fig:isoflops-frontiers}c), the optimal proportion between data and parameters is around 1.5 to 3 times that of subword models. These results motivate a different perspective on sequence length: rather than treating longer sequences as a defect, they can provide additional computation under the same parameter budget, which is a useful scaling axis for language models. Further evaluations on downstream text-only and vision--language tasks show that byte Transformers significantly improve performance on tasks that require fine-grained perception (around $40\%$ relative improvements over subword Transformers on CUTE scores, $20\%$ on OCRBench~\citep{liu2023ocrbench}).

We next investigate how a byte Transformer develops the abstractions that tokenization provides explicitly. By studying an input-copying task on byte models, we find that copying actions need to attend only to certain segmentation positions (not necessarily whitespace), where information from local text pieces can be read out.  Based on this finding, we further constrain a range of intermediate layers (up to $25\%$ of all layers) to access only these locally aggregated representations, resembling a global modeling stage over learned local structures. The resulting model retains downstream performance without additional training. This suggests that the apparent hierarchy of local abstraction and global reasoning is not merely imposed by tokenizer-free architectures such as BLT and H-Net, but can also emerge implicitly in the byte Transformer.

Finally, we study whether these learned local structures can be exploited during inference. By plotting the per-byte loss distribution, we find that byte Transformers have much higher variance than subword models: many byte positions have near-zero loss (in bits per byte), while substantially larger losses concentrate near the boundaries of the learned local structures. This observation suggests that the extra generation steps are not uniformly expensive in terms of model uncertainty. We therefore apply speculative decoding to construct local structures during inference \citep{wang2024mambabyte,kallini2026fast}, and show that a drafting model on byte Transformers can produce $3.4 \times$ more accepted tokens than on subword Transformers.   

To summarize, our results support a simple view of tokenizer-free language modeling: longer byte sequences provide useful additional computation. At the same time, the model can learn useful local abstractions internally rather than relying on a fixed tokenizer. This perspective suggests sequence length, learned abstraction, and computation allocation as closely connected dimensions for future language-model design.

\section{Scaling Byte Transformer Language Models}
\label{sec:scaling}

We first ask how the additional tokens in byte sequences affect model
quality under parameter and compute constraints. We then examine which
task capabilities benefit from byte resolution and how shared depth and
sparse capacity change these trade-offs.

\subsection{Building Byte Models}
\label{sec:architecture}




We first establish a controlled comparison between byte- and subword-level Transformers. 
Both models are trained from scratch with a mixed attention model.
\footnote{We generally follow the setting of Qwen3.5 \citep{qwen35blog}, which interleaves Gated DeltaNet \citep{yang2024gdn} and Gated Softmax Attention \citep{qiu2025gated} in a $3:1$ ratio.
We occasionally call the architecture \emph{flat} Transformer, 
in contrast to architectures with an explicit local-global-local hierarchy.}
The byte model uses a vocabulary of size $256$, 
while the subword model uses a $49,152$ token vocabulary from the cosmo2-tokenizer \citep{benallal2025smollm2}. 
We use ``token'' to refer to both byte and subword, depending on the model in context. 


Directly applying the standard training recipe to bytes is suboptimal. Byte sequences contain substantially more tokens, while each byte token carries less local semantic context than a subword token. We find the following two ingredients are effective for training byte models.

First, we use token-superposition training (TST; \citealp{peng2026tst}) to improve learning efficiency from the longer byte sequences. TST averages four consecutive token embeddings and predicts the corresponding next four-token bag. Applying TST during the first 30\% of training provides $4\times$ token exposure in this stage, while reverting to standard next-token prediction for the remaining 70\% results in $1.9\times$ total text exposure under the same number of training tokens (see Appendix \ref{app:TST} for details).

Second, we augment byte embeddings with hash embeddings as \citep{pagnoni2025blt}. 
The model learns representations for $n$-grams of bytes 
and combines them with byte embeddings, allowing each byte token to incorporate information from nearby multi-byte patterns
(Appendix \ref{app:hash_emb}). 
Similarly, we increase the convolution width of gated DeltaNet from $4$ (for subword models) to $16$ (for byte models), to build a broader local receptive field. 

Unless otherwise specified, byte models apply TST and hash embeddings
(denoted by \byte{}), 
while subword models don't (denoted by \subword{}). 
We later ablate these components to separate the contribution of the training recipe from that of input granularity.

\subsection{Parameter and Compute Efficiency under IsoFLOPs}
\label{sec:parameter-compute}


A longer byte sequence can be viewed either as a computational burden
or as an opportunity to allocate more computation to a model with the
same number of parameters.
To distinguish these effects, we study how byte and subword Transformers
scale along three coupled axes: model size, data exposure, and training
compute.

Following the standard practice of model scaling analyses, 
we train families of byte and subword models with different 
training FLOPs per token $M$ and 
cumulative training data volume $V$ (measured in bytes consumed during training),
record their validation loss $L$ (measured in bits per byte, BPB),
and fit the loss $L$ as function of $M$ and $V$.
We adapt the Chinchilla law \citep{hoffmann2022computeoptimal},
\begin{equation}
  L(M,V)=E+AM^{-\alpha}+BV^{-\beta},
  \label{eq:mv-law}
\end{equation}
where $\alpha, \beta, E, A, B$ are fiting coefficients. 
We also denote total training FLOPs $C=MD$, where $D$ is the amount of
training data in tokens.
\footnote{
A subword token or TST bag can represent multiple source bytes,
so we account for this using $V=sD$, with varying average source bytes per token $s$.}


Following \cite{hoffmann2022computeoptimal}, 
we compare model performance in the \emph{compute-optimal} manner.
Given a FLOPs budget $C$, we collect loss $L$ of 
models with various sizes $N$ (or equally, $M$) under $C$, 
fit a curve for the obtained $(N, L)$ pairs, 
and the optimal parameter count under the FLOPs budget 
$N^\star(C)$ is located at the minimum loss $L^\star$
(see Figure \ref{fig:isoflops} and Appendix \ref{app:isoflops} for details).
Similarly, we obtain the optimal training byte volume $V^\star(C)$. 
By varying $C$, we trace the 
$L^\star$--$N^\star$ and $L^\star$--$C$ Pareto frontiers
in Figure~\ref{fig:isoflops-frontiers}.

\begin{figure}[!t]
  \setlength{\abovecaptionskip}{2pt}
  \centering
  \includegraphics[width=\linewidth]{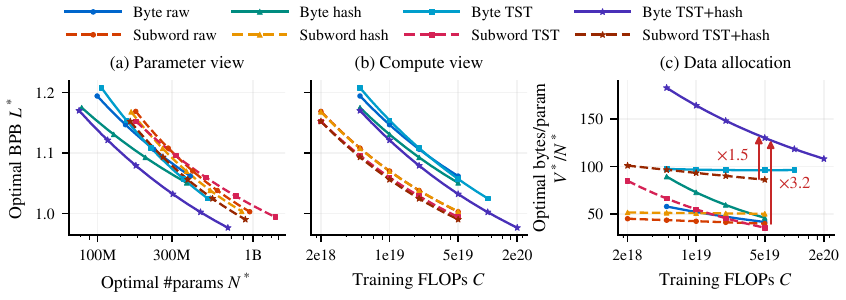}
  \caption{Pareto frontiers of compute-optimal models. (a) \byte{} has lower
    optimal BPB $L^\star$ at the same
    optimal parameter count $N^\star$, while (b) \subword{} has lower
    $L^\star$ at the same training compute $C$.
    (c) \byte{} optima generally allocate more source bytes per parameter.
    Markers indicate fitted optima at the evaluated compute budgets
    (see details in Appendix~\ref{app:isoflops}).}
  \label{fig:isoflops-frontiers}
\end{figure}

\paragraph{\byte{} has lower optimal loss at the same optimal parameter count}
The $L^\star$--$N^\star$ Pareto frontiers 
(Figure~\ref{fig:isoflops-frontiers}a) of \byte{}
consistently below \subword{}. 
It suggests that Transformers can transform additional computation 
to performance gains. 
Meanwhile, Figure~\ref{fig:isoflops-frontiers}b shows a different story:
under the same FLOPs budgets, \subword{} is always better.
We defer discussion on scaling along computation to Section \ref{sec:allocation}.

\paragraph{\byte{} optima allocate more source bytes per parameter.}
Compared with \subword{},
at the same training-compute budgets, the fitted optima select smaller
\byte{} models and generally allocate more ($3.2 \times$) source bytes per parameter
(Figure~\ref{fig:isoflops-frontiers}c).
It implies that \byte{} can reach lower BPB if training data continues to scale.

\paragraph{TST and hashing embeddings provide larger combined gains for \byte{}.}
We ablate TST and hashing at matched depth and training compute within
each family. Combining them consistently improves BPB over raw training,
with larger gains for \byte{}.

\subsection{Fine-Grained Capabilities in Text-Only and Vision--Language Tasks}
\label{sec:finegrained}

We next examine whether the modeling advantages of byte-level representations
translate into downstream text-only and vision-language tasks.
\footnote{
For Section~\ref{sec:finegrained} and \ref{sec:allocation}, we use a separate
\emph{main model suite} spanning 200M--3B sizes, instead of the \emph{IsoFLOPs sweep} settings in Section \ref{sec:parameter-compute}.
At each parameter scale, \byte{} and \subword{} use the same number of updates
and approximately the same source-text exposure. See Appendix~\ref{app:training} for detailed settings.
}
\footnote{
We train vision-language models in the LLaVA style \citep{liu2023llava}
with backbone language models replaced by \byte{}.
Training details are given in Appendix~\ref{app:vision-training}.
}
We evaluate word manipulation with CUTE \citep{edman2024cute}, 
and fine-grained
OCR tasks with OCRBench and TextVQA~\citep{singh2019textvqa}.

\begin{table}[!t]
  \centering
  \caption{\byte{} consistently outperforms \subword{} on fine-grained tasks across
model scales.}
  \label{tab:cute-scaling}
  \small
  \begin{tabular}{c cc cc cc}
    \toprule
    & \multicolumn{2}{c}{CUTE (\%; $\uparrow$)}
    & \multicolumn{2}{c}{OCRBench (\%; $\uparrow$)}
    & \multicolumn{2}{c}{TextVQA (\%; $\uparrow$)} \\
    \cmidrule(lr){2-3}
    \cmidrule(lr){4-5}
    \cmidrule(lr){6-7}
    Size
    & \subword{} & \byte{}
    & \subword{} & \byte{}
    & \subword{} & \byte{} \\
    \midrule
200M & 69.89 & \textbf{99.19}
     & 27.10 & \textbf{35.10}
     & 40.62 & \textbf{45.08} \\

400M & 70.78 & \textbf{99.18}
     & 27.40 & \textbf{37.40}
     & 43.82 & \textbf{48.37} \\

700M & 73.04 & \textbf{99.52}
     & 29.90 & \textbf{34.50}
     & 45.34 & \textbf{50.72} \\

1B & 76.44 & \textbf{99.56}
     & 31.50 & \textbf{39.40}
     & 47.50 & \textbf{52.36} \\

3B & 80.83 & \textbf{99.93}
     & 34.70 & \textbf{42.40}
     & 50.76 & \textbf{55.39} \\
 
    \bottomrule
  \end{tabular}
\end{table}

The results in Table~\ref{tab:cute-scaling} show that \byte{} has a clear
advantage in word manipulation capabilities over \subword{}.
We think the better fine-grained modeling ability of \byte{} 
comes from stronger
perception of inputs and stronger ability to generate byte-level sequences.
We particularly expect better alignment among modalities in future byte language model-based multimodality learning. 

We additionally evaluate both
model families on broader standard text-only ($21$ tasks, Table \ref{tab:qarc-main-details}) and vision-language benchmarks ($5$ tasks, Table \ref{tab:vision-all})
in Appendix~\ref{app:scaling}.
On text tasks, \byte{} model consistently outputperforms \subword{}, 
due to more computation per parameter. 
On vision-language tasks, \byte{} is on par with \subword{}.

\subsection{Controlling Computation with Shared Depth and Sparse Experts}
\label{sec:allocation}

The preceding results reveal a compute--quality trade-off between
\byte{} and \subword{}. We therefore examine two complementary ways to
shift this trade-off (Figure~\ref{fig:allocation}): increasing \subword{}
computation through shared depth, and reducing \byte{}'s activated
computation at a given parameter capacity with sparse mixture-of-experts
(MoE). Using the main model suite, we evaluate four-pass \subword{} models
and \byte{} MoE models under the training schedules of their dense
counterparts; architecture details are provided in
Appendix~\ref{app:architecture}.

\begin{figure}[htbp]
  \setlength{\abovecaptionskip}{2pt}
  \centering
  \includegraphics[width=\linewidth]{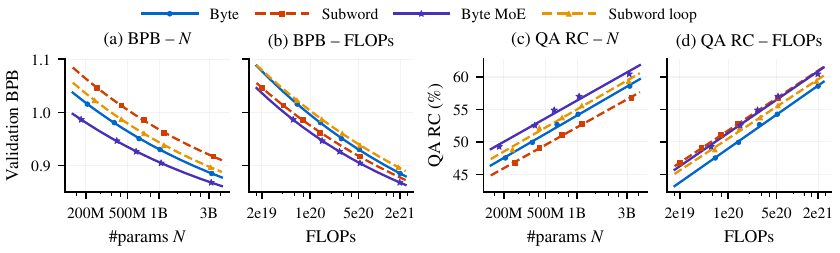}
  \caption{Compute scaling with shared depth and sparse experts.
Four-pass \subword{} closes most of the BPB gap to \byte{} with higher QA RC,
while sparse MoE lowers \byte{} BPB with less activated computation.
Markers are observations and curves are fits.}
  \label{fig:allocation}
\end{figure}

\paragraph{Increasing computation through shared depth.}
We follow the setting of LoopFormer \citep{jeddi2026loopformer}
to increase \subword{} computation with four shared backbone passes.
It closes
most of the BPB gap to \byte{} in both the parameter and compute views, showing that repeated depth is an effective
way to scale computation for subword models. 
The looped \subword{} models
also achieve consistently higher QA RC than the standard dense \byte{}
models across the evaluated scales.

\paragraph{Reducing activated computation with sparse experts.}
Sparse MoE \citep{deepseekv3}
consistently improves \byte{} BPB in both the parameter and
compute views, showing that sparse activation
is an effective way to reduce computation while retaining larger model
capacity. Notably, the 1B \byte{} MoE achieves lower BPB than the 3B dense
\subword{} model while reaching comparable QA RC.

Together, these results show that the byte--subword modeling gap depends
strongly on how computation is allocated. Increasing computation through
shared depth allows \subword{} models to recover much of the BPB gap,
while sparse activation improves the compute efficiency of \byte{} models
at comparable parameter capacity.

\section{Emergent Local Abstractions in Byte Transformers}
\label{sec:lgl}

Tokenization provides explicit local abstractions that can organize text into useful computational units. In this section, we investigate whether similar structures can emerge implicitly in byte Transformers without an explicit tokenizer or hierarchical architecture. We study this question through two complementary experiments. First, we examine how byte models organize information during prediction using an input-copying task. Second, we evaluate whether the representations revealed by this analysis can support alternative computation patterns through a frozen layer intervention.

\subsection{Copying as a probe of local information structure}

To probe how text semantics are abstracted and accessed in models, 
we study the copying task by prompting the model with ``Your task is to copy the string: $X$. Now copy it: $X$'', where $X$ is a short string composed of multiple meaningful English words.
We evaluate \byte{}-1B by analyzing attention patterns during the copying process, 
focusing on how different layers access the source text when generating the copied sequence (see Appendix~\ref{app:abstraction-copy} for details).

\begin{figure}
  \centering
  \includegraphics[width=\linewidth]{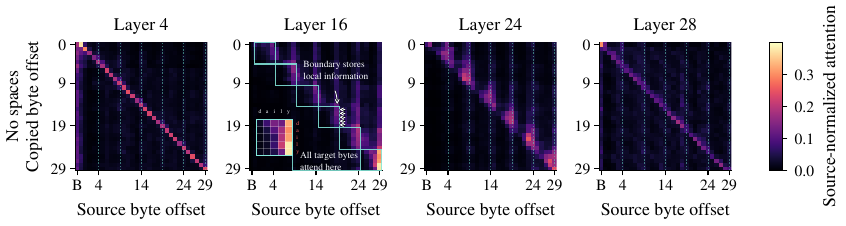}
  \caption{\textbf{\byte{}-1B attention maps during copying.}
Rows denote copied target bytes and columns denote source bytes from the
previous occurrence; each map shows target-to-source attention.
All test strings have internal spaces removed. Blue boxes indicate original
word chunks, and blue dashed lines mark their final bytes.
In intermediate layers, attention concentrates near chunk boundaries, and
multiple subsequent target bytes repeatedly attend to the same aggregated
positions, indicating the formation of meaningful local representations.}
  \label{fig:implicit-copy}
\end{figure}

As shown in Figure~\ref{fig:implicit-copy}, 
\byte{}-1B exhibits a clear attention aggregation pattern during copying.
In particular,
attention in intermediate layers becomes concentrated on a small set of 
source positions that are repeatedly accessed 
by multiple consecutive target bytes. 
This aggregation occurs even without any whitespace in inputs,
indicating that it does not simply arise from explicit separators. 
\footnote{Whitespaces may provide shortcuts during the aggregation: 
model can learn to utilize repeatedly appearing tokens, 
ablation of separators is in Figure \ref{fig:implicit-copy-separators}.}
The repeated reuse of these positions suggests that 
the byte model internally develops a mechanism for consolidating local information into representative states, 
providing an implicit structure over the byte sequence.

\subsection{Testing local abstractions with a frozen layer intervention}

The copying task reveals that multiple byte predictions can repeatedly access a small set of locally aggregated source representations. However, attention patterns alone do not establish whether these representations are sufficient for subsequent computation. We therefore introduce a frozen layer intervention that forces a contiguous range of intermediate layers to operate on a shorter sequence of locally aggregated representations, while preserving full byte-resolution states before and after the intervention. No model parameters are updated.

Concretely, following the external tokenizer applied in BLT \citep{pagnoni2025blt}, 
we partition the text sequence into local spans using a frozen, 
architecturally matched 200M \byte{} model and compress 
non-image text tokens (for vision--language tasks) 
by approximately \(4\times\). 
We treat the selected intermediate layers as a temporary global stage
(like the global model in BLT): 
each local span is mean-pooled into a single representation, 
processed at the compressed resolution, and then propagated back to the original byte-resolution sequence. This intervention introduces a coarse-grained computation stage into the pretrained \byte{}-1B model without modifying its parameters or retraining it. 
More details and results are in Appendix~\ref{app:abstraction-lgl}.

\begin{figure}
  \centering
  \includegraphics[width=\linewidth]{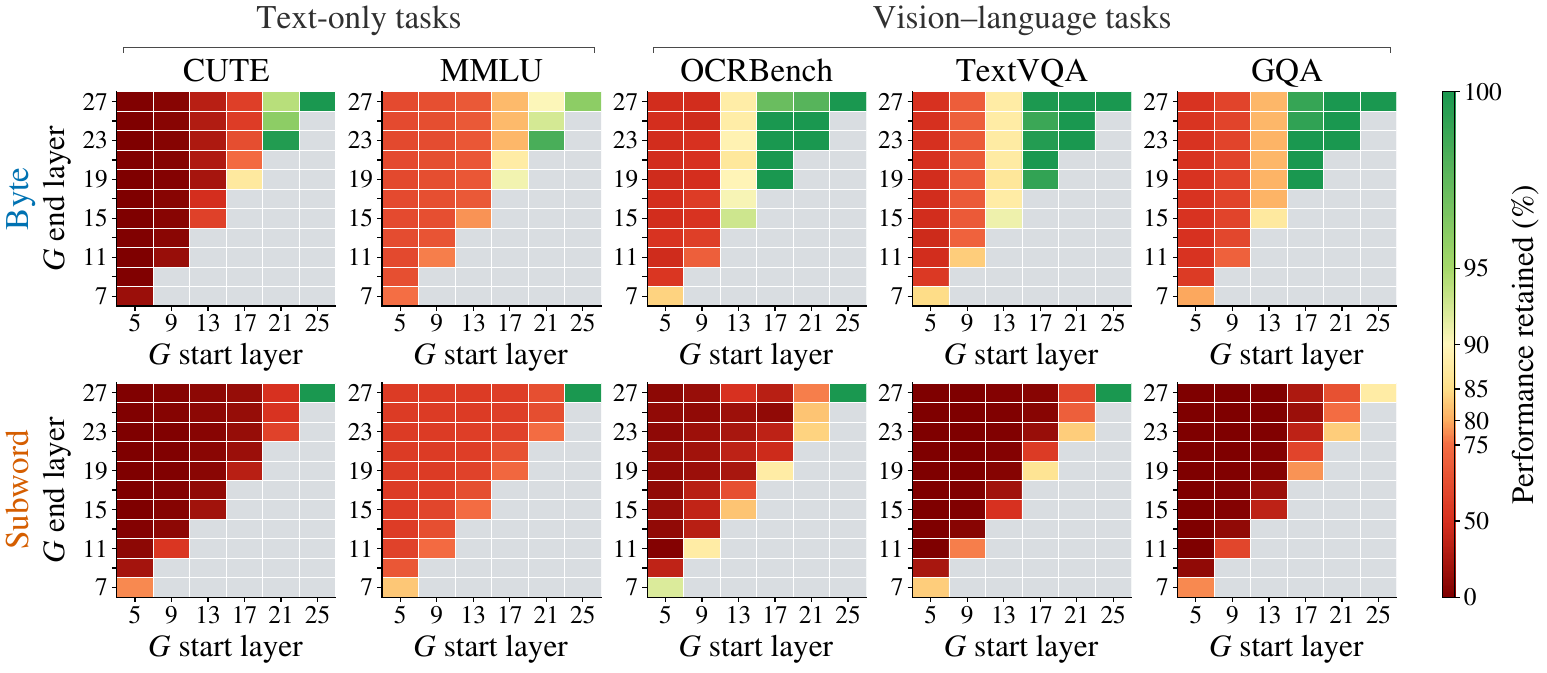}
  \caption{\textbf{Performance retention under frozen layer intervention.}
Each cell shows downstream performance relative to the uncompressed
baseline for a compressed layer interval \(G=[a,b]\).
\byte{} results are shown in the top row and \subword{} results in the
bottom row. 100\% indicates performance at or above the uncompressed baseline.
}
  \label{fig:implicit-lgl-sweep}
\end{figure}

We apply the same intervention to \byte{}-1B and \subword{}-1B across several downstream tasks, sweeping the start and end layers of the compressed global stage \(G\). As shown in Figure~\ref{fig:implicit-lgl-sweep}, \byte{} exhibits a clear recovery in performance when the global stage is placed sufficiently late in the network, with the trend becoming apparent for intervals starting around layer 17. More importantly, a stable compressible region emerges in the later layers: for \(G=[21,27]\), 7 of the model’s 28 layers (25\%) operate only on the compressed local representations, while downstream performance remains close to the uncompressed baseline across tasks. In contrast, applying the same intervention to the subword model causes substantial degradation across nearly all layer intervals. This contrast suggests that the robustness to an intermediate compressed global stage is a property of the byte model rather than a generic consequence of pooling hidden states.

\section{Information Allocation Along Text Sequences}
\label{sec:information}

Locally aggregated representations can support part of \byte{}'s
computation on a shorter sequence (Section~\ref{sec:lgl}), while the
model still predicts every byte separately. These additional modeling
steps raise the question of where the extra computation
(Section~\ref{sec:parameter-compute}) is useful and where it could be
reduced. For a given text source, deterministic, reversible tokenizations
preserve total entropy (Proposition~\ref{prop:recoding}), so finer
granularity distributes the same source information over more tokens.
We therefore examine how BPB varies across these tokens, how token
boundaries explain the distribution gap, and whether small drafts can
handle the low-BPB continuations.

\subsection{An Empirical BPB Distribution Law}
\label{sec:local-bpb}

\begin{figure}[!t]
  \setlength{\abovecaptionskip}{2pt}
  \centering
  \includegraphics[width=\linewidth]{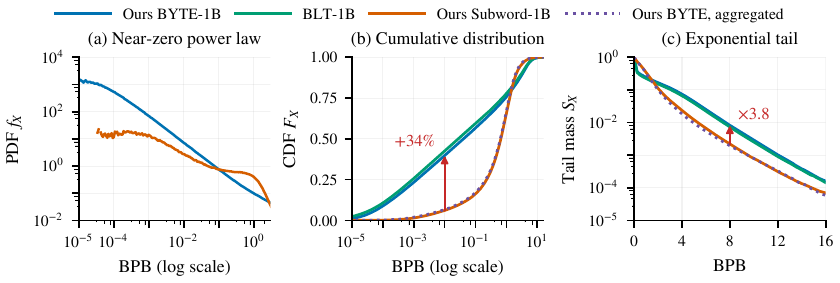}
  \caption{Token-span averaging largely closes the BPB distribution gap
    between \byte{} and \subword{}. The BPB distribution of \byte{} has more
    near-zero and high-BPB mass. All panels are byte-weighted.
    The dotted curve averages fixed
    \byte{} losses over \subword{} token spans, preserving mean BPB on
    complete spans (Appendix~\ref{app:bpb-aggregation}).}
  \label{fig:information}
\end{figure}

To compare local modeling difficulty across granularities, we use
\emph{byte-weighted BPB}: the random variable $X$ gives the BPB at a
uniformly sampled text byte. Following the BPB definition in
Appendix~\ref{app:evaluation-data}, a target token $T_i$ spanning $m_i$
original text bytes has BPB score $x_i$,
\begin{equation}
  x_i=-\frac{\ln p_\theta(T_i\mid\mathbf T_{<i})}{m_i\ln 2}.
  \label{eq:bpb-cost}
\end{equation}
In the distribution statistics, each token's BPB $x_i$ has weight
$m_i$, accounting for differences in token length. Our main comparison
uses the 1B \byte{} and \subword{} models, and we extend the analysis
to public models.

The BPB density $f_X$ exhibits a near-zero power law and an exponential
tail over finite ranges, connected by an empirical transition:
\begin{equation}
  f_X(x)\approx
  \begin{cases}
    k_{\rm pow}x^\alpha, & \epsilon<x\le u_1,\\
    f_X^{(\mathrm{trans})}(x), & u_1<x\le u_2,\\
    k_{\rm exp}e^{-x/\beta}, & u_2<x\le v.
  \end{cases}
  \label{eq:finite-density}
\end{equation}
Here $\alpha$ is the near-zero power-law exponent and $\beta$ is the
exponential decay scale. The fitted ranges are $(\epsilon,u_1]$ and
$(u_2,v]$; the transition $f_X^{(\mathrm{trans})}(x)$ and values outside
the fitted ranges remain empirical.\footnote{Appendix~\ref{app:bpb-protocol}
gives the evaluation settings, fitting procedure, and finite-range
diagnostics (Figure~\ref{fig:bpb-fits}).}
Let $F_X$ and $S_X$ denote the CDF and survival function of $X$,
with $S_X(x)=1-F_X(x)=\Pr(X>x)$ (Figure~\ref{fig:information}c).
We set the normalization coefficients $k_{\rm pow}$ and $k_{\rm exp}$
so that each fitted density integrates to the measured probability mass
in its fitting interval:
\begin{equation}
  k_{\rm pow}=\frac{F_X(u_1)-F_X(\epsilon)}
    {\int_\epsilon^{u_1}t^\alpha\,dt},
  \qquad
  k_{\rm exp}=\frac{F_X(v)-F_X(u_2)}
    {\int_{u_2}^{v}e^{-t/\beta}\,dt}.
  \label{eq:regime-amplitudes}
\end{equation}

\paragraph{Byte distributions place more mass at both low and high BPB.}
Near zero, byte densities approach $1/x$, with fitted power-law exponent
$\alpha\in[-0.97,-0.91]$, whereas subword densities are shallower, with
$\alpha\in[-0.8,-0.6]$
(Figure~\ref{fig:information}a--b). In the tail,
byte models have slightly larger decay scales $\beta$ and more
high-BPB mass overall than subword models
(Figure~\ref{fig:information}c). Together, these patterns show that
\byte{} has more probability mass at both low and high BPB than
\subword{} (Figure~\ref{fig:information}b--c; Table~\ref{tab:bpb-summary}).
We also verify this law in public byte models, including BLT and H-Net,
and widely used subword models. The byte models have nearly overlapping
BPB distributions (Figure~\ref{fig:bpb-models}; Table~\ref{tab:density}).

\begin{table}[htbp]
  \centering
  \caption{Byte models place more probability mass near zero and at high BPB
    in their native distributions (Section~\ref{sec:local-bpb}).
    Values are percentages of retained text bytes.
    Figure~\ref{fig:bpb-models} shows the full BPB distributions.}
  \label{tab:bpb-summary}
  \small
  \begin{tabular}{lcccc}
    \toprule
    Model & $F_X(10^{-3})$ & $F_X(0.1)$ & $S_X(4)$ & $S_X(8)$ \\
    \midrule
Ours \byte{}-1B & 23.3 & 56.6 & 7.43 & 0.822 \\
BLT-1B & 25.7 & 59.2 & 6.62 & 0.730 \\
H-Net 1-stage XL & 22.5 & 55.8 & 7.53 & 0.878 \\
H-Net 2-stage XL & 23.1 & 56.2 & 7.48 & 0.867 \\
\midrule
Ours \subword{}-1B & \phantom{0}1.5 & 16.3 & 2.35 & 0.218 \\
Llama-3.1-8B & \phantom{0}6.5 & 27.2 & 1.18 & 0.090 \\
Qwen3-8B & 10.5 & 29.3 & 1.91 & 0.204 \\
Qwen3.5-9B & \phantom{0}5.3 & 24.9 & 1.44 & 0.133 \\
 
    \bottomrule
  \end{tabular}
\end{table}

\paragraph{Token boundaries explain the distribution gap.}
We aggregate \byte{} BPB by averaging within each token span defined
by a subword tokenizer. The aggregated
distribution (dotted lines in Figure~\ref{fig:information}b--c) closely
matches the native \subword{} distribution.\footnote{We also ablated
boundaries from the Qwen3 and Qwen3.5 tokenizers,
which yielded similar aggregated distributions
(Figure~\ref{fig:bpb-aggregation}; Appendix~\ref{app:bpb-aggregation}).}
This result indicates that the distribution gap largely reflects how
information is grouped into tokens.
Subword tokenization provides a statistical prior for this grouping:
it combines high-BPB bytes with predictable continuations into fewer
tokens, reducing near-zero BPB mass while retaining the source's total
information (Proposition~\ref{prop:recoding}).

\paragraph{Byte modeling can allocate more computation to difficult tokens.}
For the same total source information, \byte{} uses multiple modeling
steps within a span that \subword{} handles in one step, retaining
high-BPB bytes as separate tokens (Figure~\ref{fig:information}c).
This supports the argument
in Section~\ref{sec:parameter-compute} that longer sequences can scale
computation without increasing model size.

\subsection{Drafting Low-BPB Continuations}
\label{sec:drafting}

Maintaining a fixed byte granularity allows more computation for
high-BPB tokens, but also adds modeling steps for the many low-BPB
continuations (Figure~\ref{fig:information}; Table~\ref{tab:bpb-summary}).
We therefore examine whether speculative decoding
\citep{leviathan2023speculative} can reduce this overhead.

We use existing 200M and 50M models as drafts and same-family 1B models
as targets for both \byte{} and \subword{}. We do not train specialized
drafts. On the validation dataset, we compare accepted bytes per target
forward pass in an oracle simulation on the given evaluation text.
Appendix~\ref{app:drafting} specifies the protocol.

\paragraph{Higher byte acceptance rates compensate for shorter token spans.}
At both draft sizes, \byte{} accepts about $3.4\times$ as many tokens
per target forward pass as \subword{}. Accepted bytes per forward are
nearly identical with 200M drafts and slightly higher for \byte{} with
50M drafts (Table~\ref{tab:draft}). The 200M drafts yield longer accepted
text per forward in both families.

\begin{table}[htbp]
  \centering
  \caption{Higher byte acceptance rates offset shorter token spans at both
    draft sizes, giving comparable accepted text per target forward pass.
    Targets are same-family 1B models, evaluated by an oracle
    simulation on the given evaluation text
    (Appendix~\ref{app:draft-settings}). Forward denotes a simulated
    target verification pass; counts include only accepted draft tokens
    (Appendix~\ref{app:draft-measurement}).}
  \label{tab:draft}
  \begin{tabular}{lcccc}
    \toprule
    & \multicolumn{2}{c}{200M draft model} & \multicolumn{2}{c}{50M draft model} \\
    \cmidrule(lr){2-3}\cmidrule(lr){4-5}
    & \byte{} & \subword{} & \byte{} & \subword{} \\
    \midrule
    Acceptance rate (\%) & 90.6 & 74.1 & 84.6 & 62.0 \\
    Accepted tokens / forward & 9.527 & 2.831 & 5.471 & 1.618 \\[-2pt]
    {\small\textit{Byte / Subword ratio}} & \multicolumn{2}{c}{\small $3.366\times$} & \multicolumn{2}{c}{\small $3.381\times$} \\
    Bytes / accepted token & 1.000 & 3.364 & 1.000 & 3.159 \\
    \addlinespace
    \textbf{Accepted bytes / forward} & \textbf{9.527} & \textbf{9.523} & \textbf{5.471} & \textbf{5.111} \\
    \bottomrule
  \end{tabular}
\end{table}

These acceptance lengths suggest that many additional byte tokens
can be verified together, reducing the need for separate sequential target
calls. The comparable text coverage of the two families at both draft
sizes suggests that joint verification could offset much of the extra
sequential overhead introduced by byte granularity.

\section{Related Work}
\label{sec:related}

\paragraph{Tokenizer-free language models.}
Tokenizer-free language models operate directly on fine-grained inputs such
as bytes or characters, avoiding a separately constructed subword vocabulary.
ByT5 \citep{xue2022byt5} studies byte-to-byte pretraining, while MambaByte
\citep{wang2024mambabyte} models byte sequences with selective state-space
models. 
Other works \citep{clark2022canine, yu2023megabyte, pagnoni2025blt,hwang2025hnet}, 
introduce hierarchical mechanisms that compress
or aggregate fine-grained inputs for coarser-grained computation.
In contrast, we study byte language modeling with 
Transformers with hybrid
attention that preserves byte resolution throughout the network.

\paragraph{Compute scaling and allocation.}
Model scale and computation can be varied independently through tokenization
and architectural design. Compute Optimal Tokenization
\citep{limisiewicz2026tokenization} studies how token granularity changes
the relationship between raw data, sequence length, and compute.
Repeated-depth models such as LoopFormer \citep{jeddi2026loopformer}
increase computation while sharing parameters, whereas sparse MoE models
such as DeepSeek-V3 \citep{deepseekv3} increase parameter capacity without
proportionally increasing activated computation.
Building on these directions, we use shared depth to increase \subword{}
computation and sparse experts to reduce \byte{} activated computation at
a given parameter capacity, examining how computation shapes the
byte--subword modeling trade-off.

\section{Discussion and Conclusion}
\label{sec:discussion}



Our results provide a complementary perspective to recent byte-level approaches, which introduce explicit hierarchical or learned tokenization mechanisms to control the cost of long byte sequences. Using a flat Transformer, we show that this sequence-length overhead can instead provide useful computation: byte models benefit from scaling and favor substantially more training data per parameter under a fixed FLOP budget. Moreover, although we do not impose a hierarchy, byte Transformers develop segmentation-like positions that aggregate local text information, and intermediate layers can exploit these representations without loss in downstream performance. This connects flat byte Transformers to hierarchical architectures and suggests that local abstraction and global modeling can emerge implicitly from byte-level training. These learned structures also have inference-time consequences, as prediction difficulty concentrates near their boundaries and can be exploited by speculative decoding. Together, our results suggest that sequence length, learned abstraction, and computation should be viewed as coupled design dimensions, motivating future work on adaptive computation and learned computational granularity for byte language models.

\paragraph{Limitations}
Our study is limited to vanilla Transformer backbones and the evaluated training and task scales; whether the observed scaling behavior and emergent local structures generalize to other architectures, larger models, and broader modalities remains open.

\label{maintextend}

\clearpage
\section*{Statement on the Use of Generative AI}
In this work, we used generative AI tools to assist with experimental design, method implementation and debugging, training and mathematical exposition.
We also used these tools for literature retrieval, figure and table
preparation, and manuscript drafting and editing.
We take full responsibility for the final manuscript and associated artifacts,
including AI-assisted text, code, analyses, and claims.

\section*{Reproducibility Statement}
We provide model configurations, training hyperparameters, and evaluation
settings for the main model suite and IsoFLOPs sweep in
Appendices~\ref{app:architecture}--\ref{app:scaling}.
Settings for BPB distribution analyses and experiments on draft
acceptance and local information structure are detailed in
Appendices~\ref{app:lgl}--\ref{app:drafting}.

\section*{Ethics Statement}
To the best of our knowledge, this work raises no specific ethical concerns.

\bibliography{iclr2027_conference}
\bibliographystyle{iclr2027_conference}

\clearpage
\appendix
\section{Architecture and Objectives}
\label{app:architecture}

\subsection{Tokenizers}
Our byte tokenizer encodes text directly as UTF-8 without learned merges
or Unicode normalization. To support document/chat control and padding
within a 256-entry vocabulary, we use \texttt{0xC0}, \texttt{0xC1}, and
\texttt{0xF5}--\texttt{0xFF} as special-token IDs.
These 13 bytes values are forbidden in UTF-8 and never occur in valid UTF-8
text \citep{yergeau2003utf8}.

For subword models, we use \texttt{HuggingFaceTB/cosmo2-tokenizer}
\citep{benallal2025smollm2}\footnote{\url{https://huggingface.co/HuggingFaceTB/cosmo2-tokenizer}}, a pretrained byte-level BPE tokenizer
with 49,152 vocabulary entries.
On the pretraining dataset, it averages approximately 4.56 UTF-8 bytes per token.
We use this ratio in the compute and data exposure calculations in
Section~\ref{sec:parameter-compute} and Appendices~\ref{app:isoflops}.

\subsection{Token-Superposition Training (TST)}
\label{app:TST}

TST \citep{peng2026tst} lets each model input represent several tokens.
We divide a token sequence $\mathbf T$ into non-overlapping bags of $K=4$ consecutive tokens, forming the sequence
$\mathbf B=(B_1,\ldots,B_{|\mathbf T|/K})$.
Let $T_{i,j}$ be the $j$th token in bag $B_i$.
Each bag is represented by the average of its token embeddings,
including causal hash features when enabled.

The causal model and its single output head, with parameters $\theta$,
use the preceding bags $\mathbf B_{<i}$ to produce one shared token
distribution $p_\theta(\cdot\mid\mathbf B_{<i})$ for all tokens in
the next bag $B_i$.
The loss averages token cross-entropy over all target bags
\citep[Eq.~(3)]{peng2026tst}:
\begin{equation}
  \mathcal L_{\mathrm{TST}}(\mathbf B;\theta)
  =-\frac{1}{|\mathbf T|-K}
    \sum_{i=2}^{|\mathbf T|/K}\sum_{j=1}^{K}
      \ln p_\theta(T_{i,j}\mid\mathbf B_{<i}).
  \label{eq:tst}
\end{equation}

The first 30\% of pretraining updates use TST; the remaining
70\% use ordinary next-token training on the same parameters.
During TST, each four-token bag counts as one training token.
With a fixed number of training tokens per update, this
schedule provides $0.3\times4+0.7\times1=1.9$ times the text exposure of
ordinary next-token training at the same training-token count.

\subsection{Hashed Embeddings}
\label{app:hash_emb}
To supply local context at the input, we augment token
embeddings with hashed $n$-gram representations inspired by BLT
\citep{pagnoni2025blt}.
For a sequence $x_{1:T}$ of byte or subword tokens, let
$\mathcal G$ denote the set of $n$-gram groups, indexed by their
lengths. The input representation is
\begin{equation}
  e_i=\frac{1}{1+|\mathcal G|}\left[
    E(x_i)+\sum_{n\in\mathcal G}E_n\!\left(\operatorname{Hash}(x_{i-n+1:i})\right)
  \right].
  \label{eq:hash}
\end{equation}
Here $E$ is the ordinary token embedding table, and $E_n$ is a separate
embedding table for length $n$, indexed by the shared polynomial hash $\operatorname{Hash}$.
When $i<n$, the missing left context is padded with zeros.

For \byte{}, $\mathcal G=\{5,6,7,8\}$, with four hash tables of
32,768 slots each. For \subword{}, $\mathcal G=\{2\}$,
with one hash table of 131,072 slots.
Both families therefore use 131,072 hash slots in total.

Table~\ref{tab:architecture-differences} lists the settings that differ
between \byte{} and \subword{}.

\begin{table}[htbp]
  \centering
  \caption{Architectural differences between \byte{} and \subword{}.
    TST and hash settings apply when the corresponding components are enabled.}
  \label{tab:architecture-differences}
  \small
  \begin{tabular}{lcc}
    \toprule
    Hyperparameter & \byte{} & \subword{} \\
    \midrule
    Vocabulary size & 256 & 49,152 \\
    Convolution kernel width & 16 & 4 \\
    TST bag size / duration & \multicolumn{2}{c}{4 / first 30\% of updates} \\
    Hash $n$-gram lengths $\mathcal G$ & $\{5,6,7,8\}$ & $\{2\}$ \\
    Hash tables $\times$ slots per table & $4\times32{,}768$ & $1\times131{,}072$ \\
    \bottomrule
  \end{tabular}
\end{table}

\subsection{Dense model configuration}
\label{app:model-configurations}
We scale the Qwen3.5 text architecture \citep{qwen35blog} using the
dense configurations in Table~\ref{tab:config}.
Its hybrid attention combines Gated DeltaNet (GDN)
and Gated Softmax Attention \citep{qiu2025gated} in a 3:1 ratio.
We increase depth in four-layer increments to preserve this ratio.
All models are trained from scratch with untied input and output embeddings.

We use $N$ for the number of unique active model parameters, including
the output embedding.
Input/hash embeddings are excluded.
The difference in $N$ between \subword{} and \byte{} models
comes mainly from the larger subword output embedding, with a smaller
contribution from their convolution widths.

\begin{table}[htbp]
  \centering
  \caption{Dense configurations and active parameter counts $N$
    (billions).
    In the head-count columns, kv indicates equal key and value counts;
    qk indicates equal query and key counts.}
  \label{tab:config}
  \small
  \setlength{\tabcolsep}{3pt}
  \begin{tabular}{ccccccccc}
    \toprule
    Size & Layers & Width $d$ & FFN & \begin{tabular}[c]{@{}c@{}}Head dim.\\GQA/GDN\end{tabular} &
    \begin{tabular}[c]{@{}c@{}}GQA\\q/kv heads\end{tabular} &
    \begin{tabular}[c]{@{}c@{}}GDN\\qk/v heads\end{tabular} &
    \byte{} $N$ & \subword{} $N$ \\
    \midrule
200M & 16 & 1,024 & 2,560 & 128/64 & 8/2 & 8/16 & 0.207 & 0.256 \\
400M & 20 & 1,280 & 3,200 & 128/64 & 10/2 & 10/20 & 0.374 & 0.436 \\
700M & 24 & 1,536 & 3,840 & 128/64 & 12/2 & 12/24 & 0.644 & 0.719 \\
1B & 28 & 1,792 & 4,480 & 128/64 & 14/2 & 14/28 & 1.022 & 1.109 \\
3B & 40 & 2,560 & 6,400 & 256/128 & 10/2 & 10/20 & 3.181 & 3.305 \\
 
    \bottomrule
  \end{tabular}
\end{table}

\subsection{MoE configuration}
\label{app:moe}
Hidden widths and attention configurations follow Table~\ref{tab:config}.
All layers use MoE, and the expert widths in Table~\ref{tab:experts}
apply to both routed and shared experts.
Each layer uses 8 active experts selected from 64 routed experts,
together with 1 shared expert.
For MoE models, $N$ denotes the active parameter count: it includes
routers, selected routed experts, and shared experts, and excludes
inactive routed experts.
$N_{\mathrm{total}}$ counts all model parameters except hash embeddings.

\begin{table}[htbp]
  \centering
  \caption{MoE configurations and parameter counts (billions).}
  \label{tab:experts}
  \small
  \setlength{\tabcolsep}{3pt}
  \begin{tabular}{ccccccc}
    \toprule
    Size & Layers & Expert width & \byte{} $N$ & \byte{} $N_{\mathrm{total}}$ &
    \subword{} $N$ & \subword{} $N_{\mathrm{total}}$ \\
    \midrule
200M & 16 & 256 & 0.180 & \phantom{0}0.885 & 0.230 & \phantom{0}0.985 \\
400M & 20 & 384 & 0.395 & \phantom{0}2.047 & 0.457 & \phantom{0}2.172 \\
700M & 24 & 384 & 0.604 & \phantom{0}2.983 & 0.679 & \phantom{0}3.132 \\
1B & 28 & 512 & 1.044 & \phantom{0}5.361 & 1.131 & \phantom{0}5.535 \\
3B & 40 & 768 & 3.146 & 16.358 & 3.269 & 16.607 \\
 
    \bottomrule
  \end{tabular}
\end{table}

We use the auxiliary-loss-free load balancing strategy of
DeepSeek-V3 \citep{deepseekv3}.
For a hidden state $h$, the router computes token-to-expert affinities
$s_e=\sigma(w_e^\top h)$, where $w_e$ is the router vector for expert $e$.
Each routed expert has a bias $b_e$, initialized to zero.
The eight experts with the largest $s_e+b_e$ form the selected set $\mathcal K(h)$.
The selected experts are combined with weights
\begin{equation}
  \alpha_e=
    \frac{2.5s_e}{\sum_{j\in\mathcal K(h)}s_j},
  \qquad
  \operatorname{MoE}(h)=f_{\mathrm{shared}}(h)
       +\sum_{e\in\mathcal K(h)}\alpha_e f_e(h).
  \label{eq:moe}
\end{equation}
The bias affects expert selection, while the gating weights are computed
from the original affinities $s_e$.
After each training step, we update the biases according to
\begin{equation}
  b_e\leftarrow b_e+\gamma\operatorname{sign}(\bar n-n_e),
  \label{eq:moe-balance}
\end{equation}
where $n_e$ is the number of token assignments to expert $e$ across the
whole training batch, $\bar n$ is the mean count over routed experts,
and $\gamma$ is the bias update speed.
We use $\gamma=10^{-3}$ and freeze the biases during the final 5\%
of training updates.

\subsection{Looped model configuration}
\label{app:loop}
LoopFormer \citep{jeddi2026loopformer} repeatedly applies a shared
Transformer stack, conditioning each pass on normalized time and step size.
It combines next-token prediction on full and shorter trajectories with
a shortcut-consistency objective to support elastic-depth inference.

Our looped \subword{} control adopts the shared-stack recurrence with
four fixed passes, retaining the baseline decoder blocks and ordinary
next-token objective.
Let $H^{(0)}$ denote the input token embeddings and $\Phi_\theta$ the
complete causal decoder stack with parameters $\theta$. The recurrence is
\begin{equation}
  H^{(r)}=\Phi_\theta\!\left(H^{(r-1)}\right),
  \qquad r=1,\ldots,4.
  \label{eq:loop}
\end{equation}
Both training and evaluation use this recurrence, with a single
prediction head after the final pass.
Training compute includes all four decoder passes and one output-head
application, with forward and backward matrix operations counted as
six FLOPs per participating weight per processed token.
$N$ counts each shared parameter once.

\section{Main Model Suite: Training Details}
\label{app:training}

This appendix describes training for the main model suite: the
200M--3B \byte{} and \subword{} models evaluated on CUTE, QA RC, and
vision--language benchmarks in Section~\ref{sec:finegrained}, together with the
looped and MoE models compared in Section~\ref{sec:allocation}.

\subsection{Pretraining hyperparameters}
\label{app:pretraining}
All model families are pretrained on ClimbMix \citep{diao2025climb}.
The 16-, 20-, 24-, 28-, and 40-layer models use 50k, 60k, 70k, 80k, and
100k updates, respectively.
The byte sequence length is four times the subword sequence length;
with the batch sizes in Table~\ref{tab:pretraining}, the two families
process approximately the same amount of source text per update during
ordinary next-token training. Because byte and subword models also use
the same number of updates at each scale, their total source-text
exposure under this training regime is approximately matched.

\begin{table}[htbp]
  \centering
  \caption{Default pretraining hyperparameters for the main model suite.}
  \label{tab:pretraining}
  \small
  \begin{tabularx}{0.70\linewidth}{l*{2}{>{\centering\arraybackslash}X}}
    \toprule
    Hyperparameter & \byte{} & \subword{} \\
    \midrule
    Context length (tokens) & 8,192 & 2,048 \\
    Global batch (tokens) & 1,048,576 & 262,144 \\
    \midrule
    Base learning rate $\eta_0$ & \multicolumn{2}{c}{0.005} \\
    Optimizers & \multicolumn{2}{c}{Muon (matrices), AdamW (other parameters)} \\
    Scheduler & \multicolumn{2}{c}{WSD, 1\% warmup, 20\% 1-sqrt decay} \\
    AdamW $(\beta_1,\beta_2)$ & \multicolumn{2}{c}{$(0.8,0.95)$} \\
    AdamW $\epsilon$ & \multicolumn{2}{c}{$10^{-10}$} \\
    AdamW weight decay & \multicolumn{2}{c}{0} \\
    Muon $\beta_1$ & \multicolumn{2}{c}{$0.85\xrightarrow{\text{400 steps}}0.97\xrightarrow{\text{final 20\%}}0.90$} \\
    Muon $\beta_2$ & \multicolumn{2}{c}{0.95} \\
    Muon iterations & \multicolumn{2}{c}{5} \\
    Computation precision & \multicolumn{2}{c}{bfloat16} \\
    \bottomrule
  \end{tabularx}
\end{table}

We use $\mu$P-style hyperparameter transfer \citep{yang2022mup}, with the
following width and batch corrections in our Muon/AdamW optimizer
\citep{jordan2024muon,loshchilov2019adamw}.
Let $\eta_0$ be the common base learning rate in Table~\ref{tab:pretraining}
and $d_0=768$ the reference width.
Let $r$ and $c$ denote the numbers of rows and columns, respectively,
of a parameter matrix updated by Muon. The peak learning rates for
AdamW and Muon are
\begin{equation}
  \begin{aligned}
    \eta_{\mathrm{Adam}}&=\eta_0\sqrt{2}\sqrt{d_0/d},\\
    \eta_{\mathrm{Muon}}(r,c)&=\eta_0\sqrt{2}\sqrt{\max(1,r/c)}.
  \end{aligned}
  \label{eq:mup-scaling}
\end{equation}
The factor $\sqrt{2}$ comes from learning-rate scaling with batch size.
We neither tune nor automatically scale the global batch size, so this factor remains constant across
model scales.
Our pretraining hyperparameter settings are largely based on
nanochat\footnote{\url{https://github.com/karpathy/nanochat}}.

\subsection{Text supervised fine-tuning}
\label{app:sft}
We evaluate CUTE (Appendix~\ref{app:cute}) after SFT for both byte and subword
models. Language-modeling and QA RC evaluations use base models.
The SFT mixture combines Smol-SmolTalk \citep{benallal2025smollm2}, MMLU, GSM8K \citep{cobbe2021gsm8k},
and CUTE-style tasks, with loss on assistant responses.
SFT starts at 0.8 of the pretraining learning rate, with 10\%
warmup steps and decay to zero over the final 50\% of updates.

\subsection{Vision--language training details}
\label{app:vision-training}

\paragraph{Architecture.}
We adapt \byte{} and \subword{} models into vision--language models (VLMs) using
an identical vision--language architecture. A frozen SigLIP2 SO400M vision encoder connects to the
pretrained language model through a two-layer GELU MLP projector,
which maps visual representations into the language embedding space.
The projected visual tokens enter the language sequence through a unified
image-token interface.

\paragraph{Training.}
We use a two-stage training pipeline following the standard LLaVA recipe.
Stage~1 trains only the projector on LLaVA-Pretrain, keeping the vision
encoder and language model frozen, with learning rate $10^{-3}$ and
effective batch size 256.
Stage~2 jointly fine-tunes the projector and language model on
LLaVA-NeXT-SFT, keeping the vision encoder frozen, with learning rate
$2\times10^{-5}$ and effective batch size 128.
Both stages use AdamW, bfloat16 mixed precision, 3\% warmup followed by cosine
learning-rate decay, and one training epoch.

\section{IsoFLOPs Sweep Details}
\label{app:isoflops}

\subsection{Sweep settings}
The IsoFLOPs sweep builds on the pretraining setup in
Appendix~\ref{app:pretraining} and uses the BPB evaluation protocol in
Appendix~\ref{app:evaluation-data}. We evaluate dense byte and subword
models under four recipes: raw, hash-only, TST-only, and TST+hash.
To cover the optimal model sizes at low FLOP budgets, we extend the
dense configurations in Table~\ref{tab:config} down to 8 layers.
For each model size, the number of training updates is determined by
the FLOP budget, replacing the prescribed update counts of the main
model suite. Warmup lasts 1,000 updates instead of 1\% of training;
the remaining training hyperparameters, including decay over the final
20\% of updates, follow Table~\ref{tab:pretraining}. Architecture and
enabled TST/hash components use the settings in
Table~\ref{tab:architecture-differences}.
Figure~\ref{fig:isoflops} presents the complete IsoFLOPs sweep results
across all evaluated model sizes and FLOP budgets.

\begin{figure}[htbp]
  \centering
  \includegraphics[width=\linewidth]{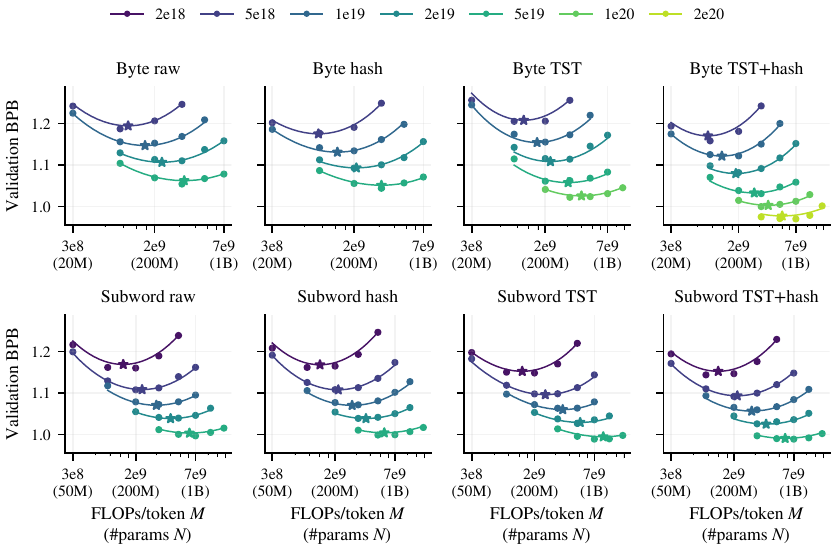}
  \caption{TST+hash lowers fitted optimal BPB in both families at shared
    budgets, with larger gains for bytes. Dots are measurements;
    curves are fixed-budget slices of $L(M,V)$, and stars
    are fitted optima. Colors denote training FLOP budgets.
    }
  \label{fig:isoflops}
\end{figure}

\subsection{Parameter, FLOP, and data accounting}
\paragraph{Training FLOPs per token.}
To compare compute costs across model sizes and token granularities,
we estimate training FLOPs per token as
\begin{equation}
  M=6N
    +\sum_{\ell\in\mathcal A}12H^Q_\ell d^h_\ell T
    +\sum_{\ell\in\mathcal D}24H^V_\ell d^k_\ell d^v_\ell.
  \label{eq:isoflops-token-compute}
\end{equation}
The sums add full-attention and Gated DeltaNet recurrent operations,
with layer sets $\mathcal A$ and $\mathcal D$, respectively.
$T$ is the context length in training tokens, $H^Q_\ell,H^V_\ell$ are
query/value head counts, and $d^h_\ell,d^k_\ell,d^v_\ell$ their dimensions.
By convention, embedding lookups, hashing, bag averaging, loss evaluation,
and optimizer updates are excluded from this estimate.

To convert fitted loss curves and compute-optimal allocations from $M$
to $N$ smoothly between trained sizes, we express both counts in terms
of model width $d$. Substituting the architecture hyperparameters and
context lengths of the discrete sweep configurations gives
\begin{equation}
  \begin{aligned}
    N_{\mathrm{byte}}(d)&=\frac{1443}{8192}d^3+\frac{19715}{8192}d^2+\frac{1031}{4}d,\\
    N_{\mathrm{subword}}(d)&=\frac{1443}{8192}d^3+\frac{17411}{8192}d^2+\frac{196615}{4}d,\\
    M_{\mathrm{byte}}(d)&=\frac{4329}{4096}d^3+\frac{1665}{4}d^2+1536d,\\
    M_{\mathrm{subword}}(d)&=\frac{4329}{4096}d^3+\frac{2025}{16}d^2+294912d.
  \end{aligned}
  \label{eq:isoflops-architecture}
\end{equation}
We then allow $d$ to vary continuously in these expressions to obtain
the mapping $N(M)$.

\paragraph{Source-byte exposure volume.}
Let $D$ be the number of training tokens, counting each superposed bag
as one token during TST. Cumulative source-byte exposure is $V=sD$,
where $s$ is the mean number of source bytes per training token,
averaged over the training schedule.
Ordinary next-token training uses $s=1$ for bytes and $s=4.561464$
for subwords. TST with four-token bags for the first 30\% of updates
multiplies either value by 1.9 (Appendix~\ref{app:TST}). It increases
source-byte exposure at a fixed training-token count while leaving
per-token compute $M$ unchanged.

\paragraph{Total training FLOPs.}
Combining per-token compute and source-byte exposure gives
\begin{equation}
  C_{\mathrm{iso}}=MD=\frac{MV}{s}.
  \label{eq:isoflops-compute}
\end{equation}
At a fixed FLOP budget, this relation determines the source-byte
exposure available to each model size.

\subsection{Global fit and compute-optimal allocation}
We fit Equation~\ref{eq:mv-law} independently to all observations of each
recipe, adapting the Chinchilla law
\citep[Section 3.3 and Appendix D.2]{hoffmann2022computeoptimal}
to per-token compute $M$ and source bytes $V$.
We estimate the parameters by minimizing a Huber loss with threshold
0.001 between predicted and observed log losses. Bounded L-BFGS-B uses
500 random initializations.

\begin{figure}[htbp]
  \centering
  \includegraphics[width=\linewidth]{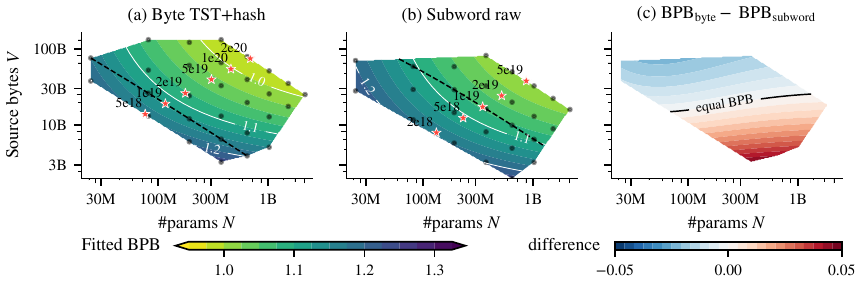}
  \caption{At matched $N,V$, \byte{} TST+hash improves over raw
    \subword{} as source-byte exposure grows. Here, $N$ excludes the output
    embedding to align parameter counts between the two model families.
    Panels (a--b) show fitted BPB, and (c) shows their difference.
    Stars are fitted compute-optimal allocations;
    black dots are training configurations, and dashed contours mark
    $10^{19}$ FLOPs. Surfaces cover only the sampled parameter--data regions.}
  \label{fig:isoflops-surfaces}
\end{figure}

The interior optimum at budget $C$ is
\begin{equation}
  M^\star(C)=
  \left[\frac{\alpha A}{\beta B}(sC)^\beta\right]^{1/(\alpha+\beta)},
  \qquad
  V^\star(C)=\frac{sC}{M^\star(C)}.
  \label{eq:mv-optimum}
\end{equation}
Let $N^\star(C)$ be the parameter count corresponding to $M^\star(C)$
and $L^\star(C)$ the resulting fitted loss. The interior balance
$\alpha A(M^\star)^{-\alpha}=\beta B(V^\star)^{-\beta}$ yields
\begin{equation}
  \frac{V^\star}{N^\star}
  =\left(\frac{\beta B}{\alpha A}\right)^{1/\beta}
    \frac{[M(N^\star)]^{\alpha/\beta}}{N^\star},
  \qquad
  L^\star=E+\left(1+\frac{\alpha}{\beta}\right)A[M(N^\star)]^{-\alpha}.
  \label{eq:mv-frontier}
\end{equation}
Thus the optimal data-to-parameter ratio depends jointly on the learned
model/data exponents and architectural compute costs.
For the reported allocations, we clip $M^\star$ to each recipe's observed
size range and convert it to $N^\star$ using Equation~\ref{eq:isoflops-architecture}.
Varying $C$ yields the $(N^\star(C),L^\star(C))$ and
$(C,L^\star(C))$ Pareto frontiers in Figure~\ref{fig:isoflops-frontiers}.

\begin{table}[htbp]
  \centering
  \caption{Fitted parameters of $L(M,V)=E+AM^{-\alpha}+BV^{-\beta}$
    (Section~\ref{sec:parameter-compute}).}
  \label{tab:isoflops-fits}
  \begin{tabular}{lccccc}
    \toprule
    Recipe & $A$ & $\alpha$ & $B$ & $\beta$ & $E$ \\
    \midrule
    Byte raw & 1054.9 & 0.42293 & 13257 & 0.50811 & 0.87358 \\
Byte hash & 300.19 & 0.36138 & 53129 & 0.57329 & 0.86467 \\
Byte TST & 2547.6 & 0.46305 & 3691.2 & 0.42706 & 0.83219 \\
Byte TST+hash & 550.58 & 0.39097 & 9343.3 & 0.46963 & 0.81537 \\
Subword raw & 2410.6 & 0.46131 & 10460 & 0.48918 & 0.85900 \\
Subword hash & 3810 & 0.48746 & 10338 & 0.48703 & 0.86609 \\
Subword TST & 229.07 & 0.33740 & 81150 & 0.57364 & 0.83360 \\
Subword TST+hash & 1825.6 & 0.44984 & 14020 & 0.48763 & 0.84645 \\
 
    \bottomrule
  \end{tabular}
\end{table}

\section{Evaluation Protocols and Additional Results}
\label{app:scaling}

\subsection{Validation bits per byte}
\label{app:evaluation-data}
We evaluate language modeling on the small held-out perplexity validation
set used in OLMo~3 training \citep{teamolmo2025olmo3}\footnote{\url{https://github.com/allenai/OLMo-core/blob/main/src/olmo_core/data/mixes/v3-small-ppl-validation.txt}}.
It contains 18,523 documents from C4, Dolma (books,
Common Crawl, peS2o, Reddit, The Stack, and Wikipedia), ICE, M2D2 S2ORC,
the Pile, and WikiText-103.

To fairly compare across tokenizations, we normalize
next-token cross-entropy by the number of text bytes.
Let $\mathbf T=(T_1,\ldots,T_{|\mathbf T|})$ be the token sequence
and $\mathbf B=(B_1,\ldots,B_{|\mathbf B|})$
the UTF-8 byte of the same sequence. For a model with
distribution $p_\theta$, BPB is calculated as
\begin{align}
  \mathcal L_{\mathrm{CE}}(\mathbf T;\theta)
    &=-\frac{1}{|\mathbf T|}\sum_{i=1}^{|\mathbf T|}
      \ln p_\theta(T_i\mid\mathbf T_{<i}),
  \nonumber\\
  \bpb
    &=\frac{|\mathbf T|}{|\mathbf B|}
      \frac{\mathcal L_{\mathrm{CE}}(\mathbf T;\theta)}{\ln 2}.
  \label{eq:global-bpb}
\end{align}

\subsection{Question answering (QA RC)}
\label{app:qarc}
We evaluate base-model question answering (QA) using the QA RC task group
of the OLMo~3 Base-Easy suite \citep{teamolmo2025olmo3}, implemented in
OLMES \citep{gu2025olmes}\footnote{\url{https://github.com/allenai/olmes}}.
RC denotes ranked classification. Each candidate is scored separately by
summing the conditional token log-probabilities of its associated
continuation, using the suite's task-specific completion prompts and
in-context examples. 

We report QA RC as the macro-average of 21 task-family scores, expressed
as a percentage. For ARC, MMLU, and Basic Skills, constituent task
accuracies are first averaged with equal weight within each family;
the final average then gives each task equal weight.
Table~\ref{tab:qarc-main-details} breaks down QA RC by task; Figure~\ref{fig:family-scores}
extends the QA RC comparison to all model families.

\begin{table}[htbp]
  \centering
  \caption{QA RC scores (\%; $\uparrow$) for our models.}
  \label{tab:qarc-main-details}
  \scriptsize
  \setlength{\tabcolsep}{2pt}
  \begin{tabular}{l*{10}{c}}
    \toprule
    & \multicolumn{2}{c}{200M} & \multicolumn{2}{c}{400M}
    & \multicolumn{2}{c}{700M} & \multicolumn{2}{c}{1B} & \multicolumn{2}{c}{3B} \\
    \cmidrule(lr){2-3}\cmidrule(lr){4-5}\cmidrule(lr){6-7}\cmidrule(lr){8-9}\cmidrule(lr){10-11}
    Task & \subword{} & \byte{} & \subword{} & \byte{} & \subword{} & \byte{} & \subword{} & \byte{} & \subword{} & \byte{} \\
    \midrule
    ARC & 54.27 & 53.67 & 55.56 & 58.58 & 59.84 & 60.49 & 61.40 & 62.83 & 65.81 & 68.36 \\
MMLU & 29.68 & 29.57 & 31.09 & 31.50 & 32.83 & 33.63 & 34.02 & 35.64 & 37.08 & 38.63 \\
CSQA & 53.24 & 52.50 & 57.08 & 54.71 & 57.08 & 60.11 & 62.24 & 61.34 & 64.62 & 69.12 \\
HellaSwag & 47.24 & 51.32 & 52.48 & 56.91 & 57.14 & 62.17 & 60.94 & 65.56 & 67.84 & 72.89 \\
WinoGrande & 51.78 & 52.72 & 53.83 & 55.17 & 55.88 & 59.19 & 58.01 & 58.96 & 60.14 & 62.98 \\
SocialIQA & 43.24 & 46.11 & 47.08 & 47.54 & 48.87 & 48.26 & 48.87 & 51.02 & 52.76 & 54.76 \\
PiQA & 70.57 & 70.02 & 72.74 & 73.61 & 74.37 & 75.19 & 75.08 & 76.99 & 78.29 & 78.84 \\
CoQA & 67.48 & 68.53 & 71.29 & 72.46 & 73.42 & 75.49 & 74.79 & 76.82 & 81.72 & 82.79 \\
DROP & 27.30 & 28.77 & 28.66 & 29.53 & 31.09 & 31.81 & 30.50 & 32.59 & 34.26 & 37.86 \\
Jeopardy & 37.30 & 36.78 & 44.12 & 40.69 & 46.16 & 48.02 & 53.12 & 52.83 & 61.60 & 64.46 \\
NaturalQs & 24.81 & 24.72 & 25.69 & 27.00 & 27.39 & 29.18 & 29.82 & 31.61 & 33.31 & 35.68 \\
SQuAD & 66.80 & 71.26 & 71.90 & 76.77 & 76.87 & 80.64 & 78.94 & 84.08 & 85.76 & 90.49 \\
SciQ & 78.10 & 78.90 & 82.40 & 84.10 & 84.90 & 86.30 & 86.80 & 89.60 & 90.10 & 91.40 \\
QASPER & 67.08 & 63.32 & 67.08 & 61.76 & 66.14 & 65.52 & 67.08 & 62.38 & 65.52 & 63.32 \\
Basic Skills & 49.30 & 51.66 & 51.89 & 53.47 & 54.97 & 57.79 & 58.15 & 59.40 & 64.51 & 65.08 \\
DBQA & 27.12 & 28.65 & 25.38 & 30.19 & 29.42 & 29.04 & 28.85 & 27.88 & 27.88 & 25.58 \\
ProtocolQA & 27.78 & 26.85 & 27.78 & 25.93 & 24.07 & 24.07 & 25.93 & 25.93 & 26.85 & 26.85 \\
LAMBADA & 36.97 & 39.06 & 40.21 & 42.07 & 43.94 & 45.45 & 45.47 & 48.26 & 51.31 & 53.70 \\
MedMCQA & 27.11 & 28.50 & 27.99 & 29.17 & 28.38 & 30.00 & 29.14 & 30.60 & 30.96 & 32.42 \\
MedQA & 21.45 & 22.70 & 23.33 & 23.10 & 24.12 & 24.74 & 25.14 & 25.14 & 27.42 & 30.32 \\
SciRIFF & 72.63 & 72.69 & 74.21 & 74.40 & 76.49 & 79.01 & 74.15 & 80.21 & 85.15 & 84.77 \\
\midrule
Mean & 46.73 & 47.54 & 49.13 & 49.94 & 51.11 & 52.67 & 52.78 & 54.27 & 56.80 & 58.59 \\
 
    \bottomrule
  \end{tabular}
\end{table}

\subsection{CUTE: character and word manipulation}
\label{app:cute}
CUTE \citep{edman2024cute} tests spelling, inverse spelling, containment,
and character/word insertion, deletion,
swapping and substitution, including random-string variants.
We evaluate 19 variants using fixed demonstration examples and greedy
generation. 
Containment scores are adjusted from $a$ to $\max(0,2a-1)$ to remove the
binary chance baseline. CUTE is evaluated with models after text SFT (Appendix~\ref{app:sft}).

Aggregate and task-level scores for main models are given in Tables~\ref{tab:cute-scaling}
and~\ref{tab:cute-main-details}, respectively; Figure~\ref{fig:family-scores}
extends the CUTE comparison to all model families.

\begin{table}[htbp]
  \centering
  \caption{CUTE scores (\%; $\uparrow$) for our main models.}
  \label{tab:cute-main-details}
  \scriptsize
  \setlength{\tabcolsep}{2pt}
  \begin{tabular}{l*{10}{c}}
    \toprule
    & \multicolumn{2}{c}{200M} & \multicolumn{2}{c}{400M}
    & \multicolumn{2}{c}{700M} & \multicolumn{2}{c}{1B} & \multicolumn{2}{c}{3B} \\
    \cmidrule(lr){2-3}\cmidrule(lr){4-5}\cmidrule(lr){6-7}\cmidrule(lr){8-9}\cmidrule(lr){10-11}
    Task & \subword{} & \byte{} & \subword{} & \byte{} & \subword{} & \byte{} & \subword{} & \byte{} & \subword{} & \byte{} \\
    \midrule
    Spelling & 51.70 & 99.80 & 56.10 & 99.80 & 60.00 & 99.90 & 68.60 & 99.80 & 81.80 & 100.00 \\
Spelling (random) & 99.10 & 99.90 & 99.00 & 99.90 & 99.40 & 99.00 & 99.60 & 100.00 & 99.80 & 100.00 \\
Inverse spelling & 59.90 & 100.00 & 63.20 & 100.00 & 73.90 & 100.00 & 78.30 & 100.00 & 82.00 & 100.00 \\
Inverse spelling (random) & 87.30 & 100.00 & 93.00 & 99.90 & 91.20 & 100.00 & 95.60 & 100.00 & 97.60 & 100.00 \\
Character containment & 67.20 & 99.20 & 70.60 & 99.60 & 72.60 & 100.00 & 74.00 & 99.80 & 78.60 & 100.00 \\
Containment (random) & 99.00 & 100.00 & 100.00 & 100.00 & 100.00 & 100.00 & 100.00 & 100.00 & 100.00 & 100.00 \\
Word containment & 99.80 & 99.00 & 100.00 & 98.80 & 100.00 & 99.60 & 99.80 & 99.80 & 100.00 & 100.00 \\
Character insertion & 17.30 & 99.70 & 19.80 & 99.30 & 21.20 & 99.30 & 23.20 & 99.20 & 31.50 & 99.80 \\
Insertion (random) & 79.40 & 99.90 & 80.90 & 99.90 & 84.60 & 100.00 & 92.20 & 100.00 & 95.10 & 100.00 \\
Word insertion & 97.60 & 98.10 & 97.90 & 97.50 & 97.30 & 98.70 & 98.30 & 98.60 & 98.70 & 99.80 \\
Character deletion & 24.10 & 99.40 & 24.70 & 99.60 & 30.50 & 99.20 & 32.20 & 99.50 & 41.10 & 100.00 \\
Deletion (random) & 95.40 & 100.00 & 95.20 & 100.00 & 96.40 & 99.90 & 98.20 & 100.00 & 99.00 & 100.00 \\
Word deletion & 96.70 & 97.20 & 97.80 & 94.30 & 97.80 & 98.70 & 98.40 & 98.70 & 98.10 & 99.70 \\
Character swapping & 4.00 & 99.50 & 3.20 & 99.50 & 6.00 & 99.70 & 9.60 & 99.60 & 18.00 & 100.00 \\
Swapping (random) & 59.30 & 99.90 & 51.50 & 99.90 & 57.80 & 100.00 & 73.10 & 100.00 & 89.50 & 100.00 \\
Word swapping & 94.20 & 95.50 & 95.90 & 98.00 & 96.50 & 97.90 & 97.50 & 98.00 & 98.80 & 99.50 \\
Character substitution & 17.20 & 99.50 & 19.70 & 99.80 & 22.50 & 99.60 & 25.30 & 99.50 & 32.00 & 100.00 \\
Substitution (random) & 80.60 & 100.00 & 78.40 & 100.00 & 81.90 & 100.00 & 89.70 & 99.90 & 94.80 & 100.00 \\
Word substitution & 98.10 & 98.10 & 97.90 & 98.60 & 98.10 & 99.40 & 98.70 & 99.30 & 99.40 & 99.80 \\
\midrule
Mean & 69.89 & 99.19 & 70.78 & 99.18 & 73.04 & 99.52 & 76.44 & 99.56 & 80.83 & 99.93 \\
 
    \bottomrule
  \end{tabular}
\end{table}

\paragraph{Prompts and task examples.}
Each evaluation prompt presents the task instruction, four numbered
question--answer demonstrations, and the test question in a single user
message. The demonstrations are fixed for each task and shared by
\byte{} and \subword{}; random-string variants reuse the corresponding
task's instruction and demonstrations.
Table~\ref{tab:cute-examples} shows a complete spelling prompt and selected
evaluation questions with their reference answers.

\begin{table}[htbp]
  \centering
  \caption{CUTE evaluation prompt and task examples. The spelling prompt is
    shown in full; the additional examples show only the final question.
    Reference answers are displayed separately from the model input.}
  \label{tab:cute-examples}
  \small
  \begin{minipage}{\linewidth}
    \textbf{Complete spelling prompt (user message)}\par\smallskip
    {\ttfamily\raggedright
    Spell out the word, putting spaces between each letter, based on the following examples:\par\smallskip
    1. Question: Spell out the word "alphabet". Answer: "a l p h a b e t"\par
    2. Question: Spell out the word "hello". Answer: "h e l l o"\par
    3. Question: Spell out the word "zebra". Answer: "z e b r a"\par
    4. Question: Spell out the word "tongue". Answer: "t o n g u e"\par\smallskip
    Question: Spell out the word "there". Respond only with the final answer. Answer:\par}
    \smallskip
    \textbf{Reference answer:} \texttt{t h e r e}
  \end{minipage}
  \medskip

  \setlength{\tabcolsep}{4pt}
  \renewcommand{\arraystretch}{1.15}
  \begin{tabularx}{\linewidth}{@{}>{\raggedright\arraybackslash}p{0.20\linewidth}
    >{\ttfamily\raggedright\arraybackslash}X
    >{\ttfamily\raggedright\arraybackslash}p{0.16\linewidth}@{}}
    \toprule
    Task & \textrm{Final question} & \textrm{Reference answer} \\
    \midrule
    Spelling (random) & Question: Spell out the word "bdj". & b d j \\
    Inverse spelling & Question: Write the word "t h e". & the \\
    Character insertion & Question: Add an "l" after every "t" in "little". & litltlle \\
    Word swapping & Question: Swap "is" and "fun" in "It is fun.". & It fun is. \\
    \bottomrule
  \end{tabularx}
\end{table}

\subsection{Vision--language evaluation}
\label{app:vision}
We evaluate with the \texttt{lmms-eval} framework~\citep{zhang2025lmmseval}
on vision--language benchmarks covering several capabilities.
DocVQA~\citep{mathew2021docvqa}, OCRBench, TextVQA, and
ChartQA~\citep{masry2022chartqa} assess OCR and document understanding;
ScienceQA-IMG~\citep{lu2022scienceqa} and
MMBench~\citep{liu2023mmbench} assess knowledge-intensive visual reasoning.
MME~\citep{fu2023mme}, GQA~\citep{hudson2019gqa}, and
POPE~\citep{li2023pope} assess general
visual understanding, covering object hallucination, visual question
answering, and perception and cognition.

\paragraph{OCR task examples.}
The OCRBench examples in Table~\ref{tab:ocr-examples} illustrate direct
transcription of irregular, handwritten, and non-semantic text.
The TextVQA examples require identifying the text relevant to a question
within a scene.

\begin{table}[htbp]
  \centering
  \caption{OCRBench and TextVQA examples with questions and reference answers.
    Images are omitted.}
  \label{tab:ocr-examples}
  \small
  \setlength{\tabcolsep}{4pt}
  \renewcommand{\arraystretch}{1.15}
  \renewcommand{\tabularxcolumn}[1]{m{#1}}
  \begin{tabularx}{\linewidth}{@{}>{\raggedright\arraybackslash}m{0.23\linewidth}
    >{\raggedright\arraybackslash}X
    >{\ttfamily\raggedright\arraybackslash}m{0.20\linewidth}@{}}
    \toprule
    Benchmark / task & Question & \textrm{Reference answer} \\
    \midrule
    OCRBench\newline Irregular text
    & What is written in the image?
    & PARLIAMENT \\
    \addlinespace[3pt]
    OCRBench\newline Handwriting
    & What is written in the image?
    & strictures \\
    \addlinespace[3pt]
    OCRBench\newline Non-semantic text
    & What is written in the image?
    & ntishgcwi \\
    \midrule
    TextVQA\newline Scene text QA
    & What does the small white text spell?
    & copenhagen \\
    \addlinespace[3pt]
    TextVQA\newline Scene text QA
    & What is the license plate number of this vehicle?
    & aj52uyv \\
    \addlinespace[3pt]
    TextVQA\newline Scene text QA
    & What is the phone number listed to rent this billboard?
    & 648-3004 \\
    \bottomrule
  \end{tabularx}
\end{table}

Table~\ref{tab:vision-all} and Table~\ref{tab:vision-moe} show that performance
on the remaining vision--language tasks is more task-dependent than on
OCRBench and TextVQA. Across both dense and MoE models, the differences
between \byte{} and \subword{} are mixed at smaller scales, but generally
become more favorable to \byte{} as model scale or capacity increases.
For the largest dense models, \byte{} surpasses \subword{} on
ScienceQA-IMG, MMBench, and POPE and nearly matches it on GQA, while the
largest \byte{} MoE model achieves higher scores on several broader
vision--language benchmarks, including ScienceQA-IMG, MMBench, MME, and
GQA. Together with the consistent gains on OCRBench and TextVQA reported in
Table~\ref{tab:cute-scaling}, these results suggest that the clearest
advantage of byte-level modeling appears on fine-grained vision--language tasks,
while broader vision--language capabilities become increasingly competitive
with scale and capacity.

\begin{table*}[!t]
  \centering
  \caption{Vision--language evaluation for dense models.}
  \label{tab:vision-all}

  \fontsize{8}{10}\selectfont
  \setlength{\tabcolsep}{2.5pt}
  \begin{tabular}{@{}cl*{9}{c}@{}}
    \toprule
    & & \multicolumn{4}{c}{\shortstack{OCR \& document\\understanding}}
    & \multicolumn{2}{c}{\shortstack{Knowledge-intensive\\visual reasoning}}
    & \multicolumn{3}{c}{\shortstack{General visual\\understanding}} \\
    \cmidrule(lr){3-6}\cmidrule(lr){7-8}\cmidrule(lr){9-11}
    Size & Model
    & DocVQA
    & OCRBench
    & TextVQA
    & ChartQA
    & ScienceQA-IMG
    & MMBench
    & MME
    & GQA
    & POPE \\
    \midrule
200M & \subword{}
     & 25.98 & 27.10 & 40.62 & 24.04
     & \textbf{42.49} & \textbf{\phantom{0}8.16}
     & 1164.20 & \textbf{48.26} & \textbf{85.37} \\
200M & \byte{}
     & \textbf{28.56} & \textbf{35.10} & \textbf{45.08} & \textbf{26.56}
     & 37.88 & \phantom{0}2.06
     & \textbf{1187.69} & 46.26 & 84.65 \\
\midrule

400M & \subword{}
     & 29.77 & 27.40 & 43.82 & 28.00
     & 46.01 & \textbf{23.11}
     & \textbf{1245.94} & \textbf{49.67} & 82.28 \\
400M & \byte{}
     & \textbf{32.64} & \textbf{37.40} & \textbf{48.37} & \textbf{28.72}
     & \textbf{48.19} & 13.48
     & 1206.10 & 49.13 & \textbf{82.95} \\
\midrule

700M & \subword{}
     & 31.35 & 29.90 & 45.34 & 30.08
     & \textbf{50.17} & 29.98
     & \textbf{1332.91} & \textbf{51.87} & \textbf{85.39} \\
700M & \byte{}
     & \textbf{33.43} & \textbf{34.50} & \textbf{50.72} & \textbf{30.56}
     & 47.65 & \textbf{30.15}
     & 1326.96 & 50.50 & 82.46 \\
\midrule

1B & \subword{}
     & \textbf{34.38} & 31.50 & 47.50 & 30.96
     & \textbf{54.93} & \textbf{36.68}
     & 1327.14 & \textbf{53.21} & \textbf{84.99} \\
1B & \byte{}
     & 34.19 & \textbf{39.40} & \textbf{52.36} & \textbf{32.12}
     & 47.89 & 31.01
     & \textbf{1344.80} & 52.28 & 84.51 \\
\midrule

3B & \subword{}
     & 37.40 & 34.70 & 50.76 & 36.92
     & 56.97 & 44.15
     & \textbf{1470.32} & \textbf{56.38} & 85.72 \\
3B & \byte{}
     & \textbf{38.83} & \textbf{42.40} & \textbf{55.39} & \textbf{38.36}
     & \textbf{60.54} & \textbf{45.62}
     & 1428.47 & 55.99 & \textbf{86.13} \\
 
    \bottomrule
  \end{tabular}
\end{table*}

\begin{table*}[!t]
  \centering
  \caption{Vision--language evaluation for MoE models.}
  \label{tab:vision-moe}

  \fontsize{8}{10}\selectfont
  \setlength{\tabcolsep}{2.5pt}
  \begin{tabular}{@{}cl*{9}{c}@{}}
    \toprule
    & & \multicolumn{4}{c}{\shortstack{OCR \& document\\understanding}}
    & \multicolumn{2}{c}{\shortstack{Knowledge-intensive\\visual reasoning}}
    & \multicolumn{3}{c}{\shortstack{General visual\\understanding}} \\
    \cmidrule(lr){3-6}\cmidrule(lr){7-8}\cmidrule(lr){9-11}
    Size & Model
    & DocVQA
    & OCRBench
    & TextVQA
    & ChartQA
    & ScienceQA-IMG
    & MMBench
    & MME
    & GQA
    & POPE \\
    \midrule
200M & \subword{}
     & \textbf{25.87} & 26.00 & 39.79 & 23.04
     & \textbf{50.37} & \textbf{\phantom{0}2.66}
     & \textbf{1219.18} & \textbf{46.10} & \textbf{82.26} \\
200M & \byte{}
     & 25.85 & \textbf{32.90} & \textbf{43.22} & \textbf{23.28}
     & 39.96 & \phantom{0}1.37
     & 1154.95 & 43.81 & 81.36 \\
\midrule

400M & \subword{}
     & 28.68 & 27.10 & 44.01 & 23.48
     & 44.97 & \phantom{0}6.35
     & 1265.12 & 47.06 & \textbf{84.78} \\
400M & \byte{}
     & \textbf{30.84} & \textbf{35.90} & \textbf{49.61} & \textbf{27.32}
     & \textbf{47.30} & \textbf{18.21}
     & \textbf{1292.77} & \textbf{47.61} & 84.75 \\
\midrule

700M & \subword{}
     & 34.26 & 33.40 & 49.28 & 34.48
     & \textbf{52.85} & \textbf{35.57}
     & 1335.96 & 53.65 & 85.17 \\
700M & \byte{}
     & \textbf{35.15} & \textbf{39.60} & \textbf{53.50} & \textbf{35.36}
     & 51.36 & 30.41
     & \textbf{1341.47} & \textbf{53.78} & \textbf{86.04} \\
\midrule

1B & \subword{}
     & \textbf{36.39} & \textbf{33.40} & \textbf{50.13} & \textbf{35.88}
     & \textbf{56.17} & 30.32
     & 1247.24 & \textbf{54.91} & 86.90 \\
1B & \byte{}
     & 33.36 & 32.30 & 42.45 & 34.72
     & 54.04 & \textbf{38.57}
     & \textbf{1390.83} & 49.39 & \textbf{86.96} \\
\midrule

3B & \subword{}
     & \textbf{42.17} & 39.70 & 53.57 & 40.84
     & 58.50 & 46.13
     & 1469.19 & 57.87 & \textbf{87.36} \\
3B & \byte{}
     & 41.87 & \textbf{46.10} & \textbf{59.70} & \textbf{43.08}
     & \textbf{61.47} & \textbf{55.41}
     & \textbf{1576.49} & \textbf{59.06} & 87.33 \\
 
    \bottomrule
  \end{tabular}
\end{table*}

\clearpage
\subsection{Parameter and compute efficiency across model families}

\begin{figure}[htbp]
  \centering
  \includegraphics[width=\linewidth]{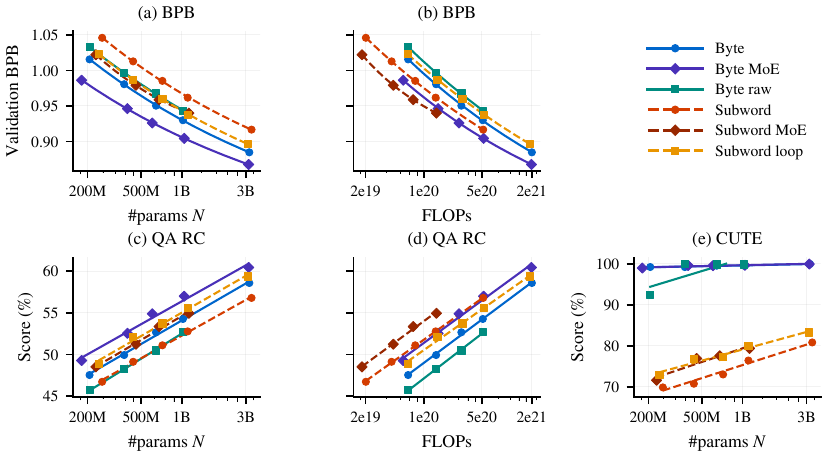}
  \caption{In the main model suite, dense \byte{} achieves lower BPB and
    higher QA RC than depth-matched dense \subword{}, with greater
    training compute.
    BPB and QA RC use base models, and CUTE uses SFT models.
    Points are observations; curves summarize trends within each
    family's observed range (Appendix~\ref{app:scaling}).}
  \label{fig:family-scores}
\end{figure}

\section{Details of Local Information Structure Experiments}
\label{app:lgl}

This appendix provides additional details for the local information structure
experiments in Section~\ref{sec:lgl}. We first describe the input-copying task,
including the prompt construction, input variants, and attention measurement
procedure. Additional analyses with alternative separator settings are also
provided. We then describe the frozen layer intervention, including local span
construction, compressed global stage design, and downstream evaluation
settings.

\subsection{Input-copying Task Setting}
\label{app:abstraction-copy}

\paragraph{Prompt and input variants.}
Copying requires the model to retrieve information from an earlier occurrence
of the same text, making it a useful probe for studying how information is
organized and accessed internally. Following the use of repeated sequences to
study induction heads \citep{olsson2022induction}, we inspect
\byte{}-1B with the prompt
\begin{center}
  \small\texttt{Your task is copy the string: X now copy it: X}
\end{center}
Both occurrences of $X$ are supplied in a single teacher-forced causal
forward pass. The string contains ten six-word English phrases; each word has
exactly 5 ASCII bytes, allowing word boundaries to align across examples.
For example, one phrase is \texttt{"quiet birds cross ocean waves daily"}.
The original and space-free variants contain 35 and 30 bytes, respectively.

\paragraph{Attention measurement.}
For copied byte offset $r$, the query corresponds to the position immediately
before that byte, whose hidden state predicts the next byte. The key set
$\mathcal{K}$ contains the source phrase and its preceding separator position,
denoted as $B$. Let
$A_{n,h}^{(\ell)}(r,k)$ denote the raw attention weight from this query to
source key $k$ in example $n$, head $h$, and layer $\ell$.
With $N=10$ examples and $H=14$ heads, we compute

\begin{equation}
  \bar A^{(\ell)}_{r,k}
    =
    \frac{1}{NH}
    \sum_{n=1}^{N}\sum_{h=1}^{H}
    A_{n,h}^{(\ell)}(r,k),
\end{equation}

and normalize each row within the source region:

\begin{equation}
  \widetilde A^{(\ell)}_{r,k}
  =
  \frac{\bar A^{(\ell)}_{r,k}}
  {\sum_{j\in\mathcal{K}}\bar A^{(\ell)}_{r,j}} .
\end{equation}

Thus, each row represents the attention distribution from the query
position for a copied byte token to the source sequence. Word-end guides
indicate the final byte of each original word; in the space-free condition, these guides are only
annotations and do not appear in the input.

\paragraph{Alternative separator analysis.}
To investigate how word separators affect attention, we compare spaces with
\texttt{|}, \texttt{;}, and \texttt{and}. Only separators inside $X$ change;
the underlying phrases and surrounding instruction remain fixed.
Figure~\ref{fig:implicit-copy-separators} shows the resulting attention maps.

\begin{figure}[!htb]
  \centering
  \includegraphics[width=\linewidth]{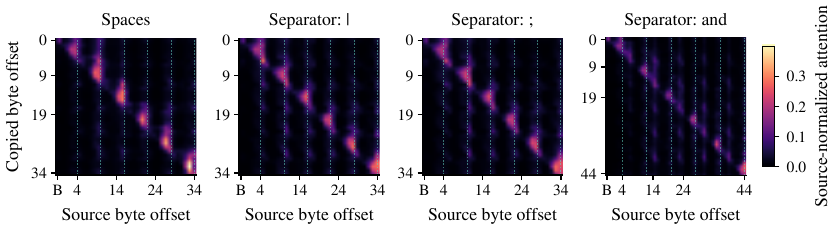}
  \caption{\textbf{Ablations on separators: Local retrieval persists with alternative separators.}
    Layer-24 \byte{}-1B maps use the copying probe in Figure~\ref{fig:implicit-copy},
    with words separated by spaces, \texttt{|}, \texttt{;}, or \texttt{and}.
    Dashed lines mark original word ends; colors share the same
    source-normalized attention scale.}
  \label{fig:implicit-copy-separators}
\end{figure}

\subsection{Frozen Layer Intervention}
\label{app:abstraction-lgl}

\begin{figure}[!htb]
  \centering
  \includegraphics[width=\linewidth]{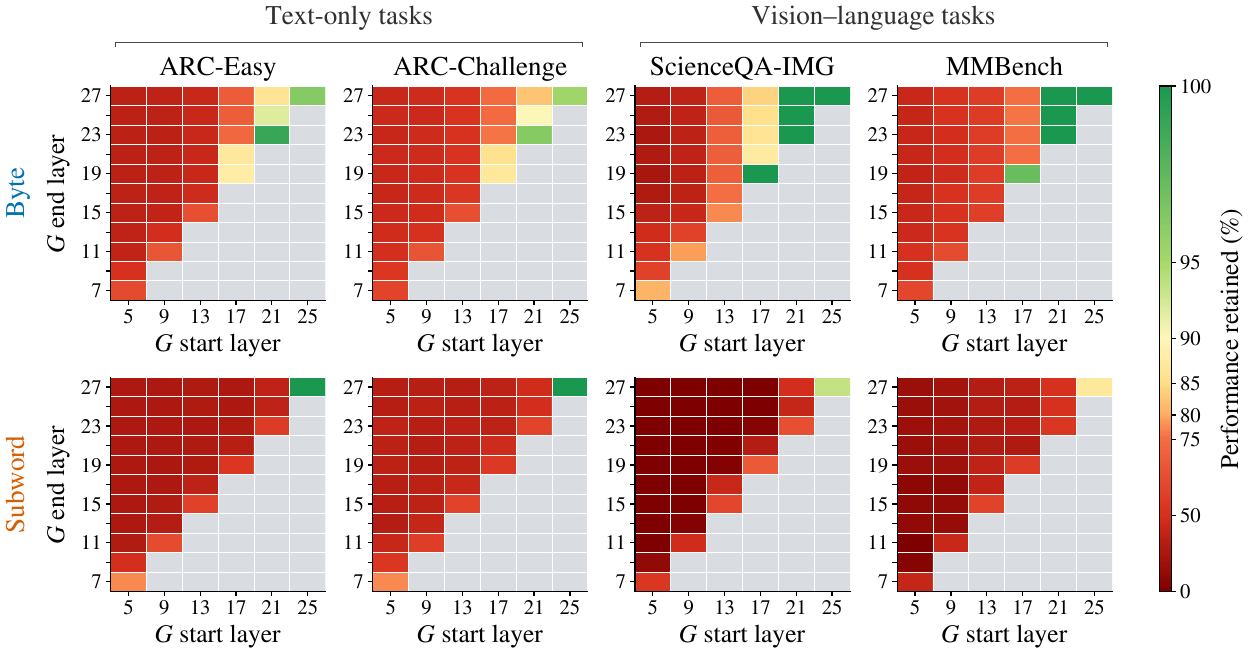}
  \caption{\textbf{Additional performance retention results under the frozen layer intervention.}
Each cell reports downstream performance relative to the uncompressed
baseline for a compressed layer interval \(G=[a,b]\).
The axes denote the start layer \(a\), end layer \(b\), and interval
length \(|G|=b-a+1\).
\byte{} results are shown in the top row and \subword{} results in the
bottom row. 100\% indicates performance at or above the uncompressed
baseline.}
  \label{fig:implicit-lgl-sweep-additional}
\end{figure}

\paragraph{Local span construction.}
The frozen layer intervention introduces a compressed global stage over local
representations while preserving the original byte-resolution computation
before and after the selected layers. Inspired by the local--global--local
organization of BLT and H-Net \citep{pagnoni2025blt,hwang2025hnet}, we identify
local spans in the prompts of the 28-layer \byte{}-1B and \subword{}-1B models
using a matched 200M model from the corresponding family.
For a prompt token $T_i$, the smaller model with parameters $\theta$
computes the token loss

\begin{equation}
  \mathcal L_{\mathrm{CE}}(T_i\mid\mathbf T_{<i};\theta)
    =-\ln p_\theta(T_i\mid\mathbf T_{<i}).
  \label{eq:token-ce}
\end{equation}

Tokens with high loss mark patch boundaries, grouping predictable
continuations into local spans.

For text-only tasks with $T$ prompt tokens, we target
$\lceil T/4\rceil$ patches, corresponding to approximately $4\times$
compression of the text sequence. The first token always starts a patch,
and we select the highest loss threshold that yields the target number of
patch starts, retaining all ties.

For vision--language tasks with $T$ non-image prompt tokens and $V$ image
tokens, we apply the same $4\times$ target ratio only to the text portion,
giving a target of
$V+\lceil T/4\rceil$ patches. Each image token is kept as a singleton patch,
while text patch boundaries are selected using the same loss threshold
procedure. In both settings, prompt patches remain fixed during generation,
and each newly generated token is appended as a singleton patch.

\paragraph{Compressed global stage.}
Let $G=[a,b]$ denote the selected layer interval. Before layer $a$, the model
operates at the original resolution and produces hidden states $h_i$. For a
local span $P_j$, we construct a pooled representation

\begin{equation}
z_j=\frac{1}{|P_j|}\sum_{i\in P_j}h_i .
\end{equation}

The original layers in $G$ then operate on the shorter sequence of local
representations. After processing, the global updates are mapped back to the
original resolution through a residual reconstruction:

\begin{equation}
\hat h_i = h_i+\Delta_{j(i)},
\qquad
j(i)=\max\{j:e_j\le i\}.
\end{equation}

Here, $e_j$ is the final position of span $P_j$, and $\Delta_j$ is its
representation update after the global stage. Tokens before the first
completed span keep their original states.

This preserves the original byte-level states while requiring the selected
layers to process only local aggregated representations.

\paragraph{Evaluation settings and additional results.}

We sweep the start layer
$a\in\{5,9,13,17,21,25\}$ and interval lengths
$|G|\in\{3,5,\ldots,23\}$, retaining intervals with $b\le27$.
The final layer operates at full resolution after reconstruction.
No parameter updates or additional training are performed.

For text-only evaluation, we use CUTE, ARC-Easy,
ARC-Challenge, and MMLU. For vision--language evaluation, we use
OCRBench, TextVQA, GQA, ScienceQA-IMG, and MMBench.
Results on CUTE, MMLU, OCRBench, TextVQA, and GQA are reported in
Figure~\ref{fig:implicit-lgl-sweep}.
The remaining results on ARC-Easy, ARC-Challenge, ScienceQA-IMG, and MMBench are shown in
Figure~\ref{fig:implicit-lgl-sweep-additional}.

\section{BPB Distribution Analyses}
\label{app:derivation}

\subsection{Source entropy and token granularity}
\label{app:bpb-theory}

We first establish a common total entropy for different tokenizations
of the same text source. Here $H$ denotes Shannon entropy in bits.
\begin{proposition}[Entropy invariance across tokenizers]
\label{prop:recoding}
Let $\mathbf B$ be a random complete finite byte sequence with finite
entropy. For two fixed deterministic tokenizers $\tau_1$ and $\tau_2$,
define $\mathbf T^{(j)}=\tau_j(\mathbf B)$ for $j\in\{1,2\}$.
If each encoding uniquely determines the original byte sequence, then
\begin{equation}
  H(\mathbf T^{(1)})=H(\mathbf T^{(2)})=H(\mathbf B).
  \label{eq:text-entropy}
\end{equation}
\end{proposition}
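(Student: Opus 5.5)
The plan is to show that each encoding $\mathbf T^{(j)}$ and the source $\mathbf B$ are functions of one another, and then apply the invariance of Shannon entropy under bijections between countable alphabets. The random objects take values in countable sets: complete finite byte sequences form $\bigcup_{n\ge 0}\{0,\dots,255\}^n$, and token sequences form the set of finite strings over a finite vocabulary. All entropies are therefore discrete Shannon entropies. By hypothesis $H(\mathbf B)<\infty$.

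\textbf{Step 1: reduce to a single tokenizer.} Fix $j$. Since $\tau_j$ is deterministic, $\mathbf T^{(j)}=\tau_j(\mathbf B)$ is a measurable function of $\mathbf B$. By the hypothesis that the encoding uniquely determines the byte sequence, $\tau_j$ is injective, so there is a decoder $\delta_j$ defined on the image $\tau_j(\mathcal B)$ with $\delta_j(\tau_j(b))=b$. Hence $\mathbf B=\delta_j(\mathbf T^{(j)})$ almost surely.

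\textbf{Step 2: compare the two entropies.} Apply the standard inequality $H(f(X))\le H(X)$ twice. It follows from the chain rule: $H(X)=H(X,f(X))=H(f(X))+H(X\mid f(X))\ge H(f(X))$. Taking $f=\tau_j$ gives $H(\mathbf T^{(j)})\le H(\mathbf B)$, and taking $f=\delta_j$ gives $H(\mathbf B)\le H(\mathbf T^{(j)})$. Together these give $H(\mathbf T^{(j)})=H(\mathbf B)$.

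A more direct route avoids conditional entropy altogether. Because $\tau_j$ is injective, the pushforward satisfies $\Pr(\mathbf T^{(j)}=t)=\Pr(\mathbf B=\delta_j(t))$ for $t$ in the image, and equals zero otherwise. The sum $-\sum_t p(t)\log_2 p(t)$ is then a reindexing of $-\sum_b p(b)\log_2 p(b)$. Rearranging the series is legitimate because all terms are nonnegative, so this holds even without finiteness. Applying the argument for $j=1,2$ yields \eqref{eq:text-entropy}.

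\textbf{Main obstacle.} The only real subtlety is the countably infinite alphabet. The chain-rule argument needs $H(\mathbf B)<\infty$ so that no $\infty-\infty$ terms appear, which is exactly why the finiteness hypothesis is stated. The reindexing argument sidesteps this, since it only uses nonnegativity of the summands. A second point worth remarking is what the proposition does \emph{not} assert. It says nothing about entropy per token or about any model's cross-entropy. It says only that the total information is the same across tokenizations, which is how the main text uses it when it speaks of distributing the same source information over more tokens.
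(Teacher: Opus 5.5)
Your proposal is correct. The ``more direct route'' at the end of your Step~2 is exactly the paper's proof: injectivity makes $\tau_j$ a bijection from the support of $\mathbf B$ onto the support of $\mathbf T^{(j)}$ with $p_j(\tau_j(\mathbf b))=p_B(\mathbf b)$, and the entropy sum is simply reindexed.

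Your primary argument applies $H(f(X))\le H(X)$ once with $f=\tau_j$ and once with the decoder $f=\delta_j$, which is a genuinely different route. It shows \emph{why} equality holds: the loss term $H(\mathbf B\mid\mathbf T^{(j)})$ vanishes. It also extends to lossy tokenizers, where it still gives $H(\mathbf T^{(j)})\le H(\mathbf B)$ with the gap equal to $H(\mathbf B\mid\mathbf T^{(j)})$. The reindexing argument is more elementary and, as you note, never uses the finite-entropy hypothesis; the paper states that hypothesis but its proof does not invoke it.

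One small correction to your ``main obstacle'' remark. Your chain-rule display is written additively, $H(X)=H(f(X))+H(X\mid f(X))$, and this identity holds in $[0,\infty]$ as a sum of nonnegative series. So that route does not need $H(\mathbf B)<\infty$ either; finiteness would matter only if one wrote $H(X\mid f(X))=H(X)-H(f(X))$.
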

\begin{proof}
Let $\mathcal S$ be the support of $\mathbf B$. For either tokenizer
$\tau_j$, unique recovery implies that
$\tau_j(\mathbf b)=\tau_j(\mathbf b')$ only if $\mathbf b=\mathbf b'$.
Thus $\tau_j$ is a bijection between $\mathcal S$ and the support
$\tau_j(\mathcal S)$ of $\mathbf T^{(j)}$. Let $p_B$ be the source
probability mass function and $p_j$ the induced probability mass
function of $\mathbf T^{(j)}$. We have
$p_j(\tau_j(\mathbf b))=p_B(\mathbf b)$ for every $\mathbf b\in\mathcal S$,
so reindexing the entropy sum gives
\begin{equation*}
  \begin{aligned}
    H(\mathbf T^{(j)})
    &=-\sum_{\mathbf b\in\mathcal S}p_j(\tau_j(\mathbf b))
      \log_2 p_j(\tau_j(\mathbf b))\\
    &=-\sum_{\mathbf b\in\mathcal S}p_B(\mathbf b)
      \log_2 p_B(\mathbf b)=H(\mathbf B).
  \end{aligned}
\end{equation*}
\end{proof}
For each complete text $\mathbf b$ in the source support, the same
probability correspondence preserves its total information
$-\log_2 p_B(\mathbf b)$ across tokenizations.

Byte prediction exposes the intermediate prefixes inside a subword
span, whereas a subword model predicts the span jointly. This source
invariance motivates the comparison, but model BPB also reflects
approximation error and available context.

\subsection{Experimental settings for BPB distribution analysis}
\label{app:bpb-protocol}

We compare models on a 2 MiB subset of the evaluation dataset in
Appendix~\ref{app:evaluation-data}. Subword models are
evaluated with sequences of up to 4,096 tokens.
Figure~\ref{fig:bpb-models} shows the full BPB distributions.

\paragraph{H-Net target filtering.}
\label{app:bpb-hnet}
H-Net's highest BPB values concentrate on TAB (\texttt{0x09}),
LF (\texttt{0x0A}), CR (\texttt{0x0D}), \texttt{\{} (\texttt{0x7B}),
and \texttt{\}} (\texttt{0x7D}). H-Net were trained
on FineWeb-Edu, a filtered subset of FineWeb.
FineWeb's C4 filter rejects an entire document when a surviving line
contains \texttt{\{}, also removing any paired \texttt{\}} in that
document. It drops empty lines and joins retained lines with a single
LF, while text extraction collapses whitespace such as TAB and CR.
These preprocessing choices may explain the extreme BPB values through
reduced training exposure to braces, TAB, CR, and repeated line breaks.

To limit the influence of these formatting differences on the distribution
comparison, we exclude the five target characters from H-Net's distribution
statistics while retaining them in the context.
H-Net thus retains 98.5\% of the full evaluation set.

\begin{figure}[!htb]
  \centering
  \includegraphics[width=\linewidth]{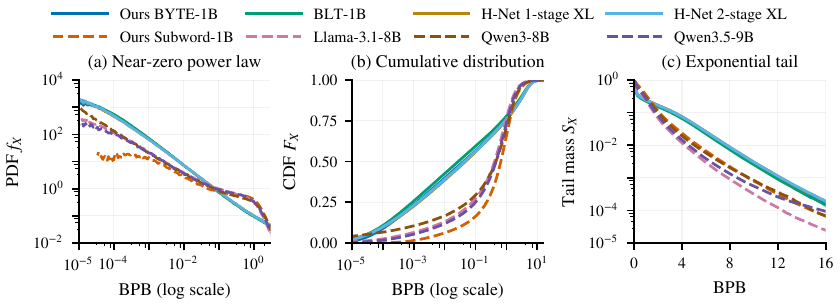}
  \caption{Byte models show more near-zero
    and high-BPB mass than subword models. All panels are byte-weighted.}
  \label{fig:bpb-models}
\end{figure}

\paragraph{Finite-range density fits.}
To identify the BPB ranges where each density law holds, we use the
local power-law exponent and exponential decay scale shown in
Figure~\ref{fig:bpb-fits}:
\begin{equation}
  \alpha_{\rm local}(x)=\frac{d\ln f_X(x)}{d\ln x},
  \qquad
  \beta_{\rm local}(x)=-\left[\frac{d\ln f_X(x)}{dx}\right]^{-1}.
  \label{eq:local-density-parameters}
\end{equation}

For the power law, we select the widest stable interval spanning at
least 1.5 decades; for the exponential tail, we select the longest
stable core spanning at least 4 BPB. Both require global and local
log-density RMS errors at most 0.15, and local power-law exponents
must also lie within 0.15 of the fitted $\alpha$.

Table~\ref{tab:density} reports the resulting fitting ranges,
$\alpha$, and $\beta$.

\begin{table}[htbp]
  \centering
  \caption{Finite-range density fits $f_X\propto x^\alpha$ near zero
    and $f_X\propto e^{-x/\beta}$ in the tail, with $\beta$ in BPB.
    Byte exponents are nearer $-1$; tail scales are comparable across
    granularities.}
  \label{tab:density}
  \begin{tabular}{lcccc}
    \toprule
    Model & Near-zero BPB & $\alpha$ & Tail BPB & $\beta$ \\
    \midrule
Ours \byte{}-1B & $[5.4\times10^{-5},2.15]$ & -0.946 & $[3,15.5]$ & 1.898 \\
BLT-1B & $[5.4\times10^{-5},2.86]$ & -0.952 & $[3.5,15.5]$ & 1.855 \\
H-Net 1-stage XL & $[4.1\times10^{-5},5.07]$ & -0.916 & $[5.5,16]$ & 1.867 \\
H-Net 2-stage XL & $[3.1\times10^{-5},5.07]$ & -0.918 & $[5.5,16.5]$ & 1.899 \\
Ours \subword{}-1B & $[4.0\times10^{-4},0.164]$ & -0.612 & $[3.5,13]$ & 1.684 \\
Llama-3.1-8B & $[1.3\times10^{-5},0.164]$ & -0.696 & $[2.5,11.25]$ & 1.407 \\
Qwen3-8B & $[9.9\times10^{-7},0.0923]$ & -0.789 & $[3,13.5]$ & 1.735 \\
Qwen3.5-9B & $[3.1\times10^{-5},0.164]$ & -0.691 & $[2.5,11.75]$ & 1.492 \\
 
    \bottomrule
  \end{tabular}
\end{table}

\begin{figure}[p]
  \centering
  \includegraphics[width=\linewidth]{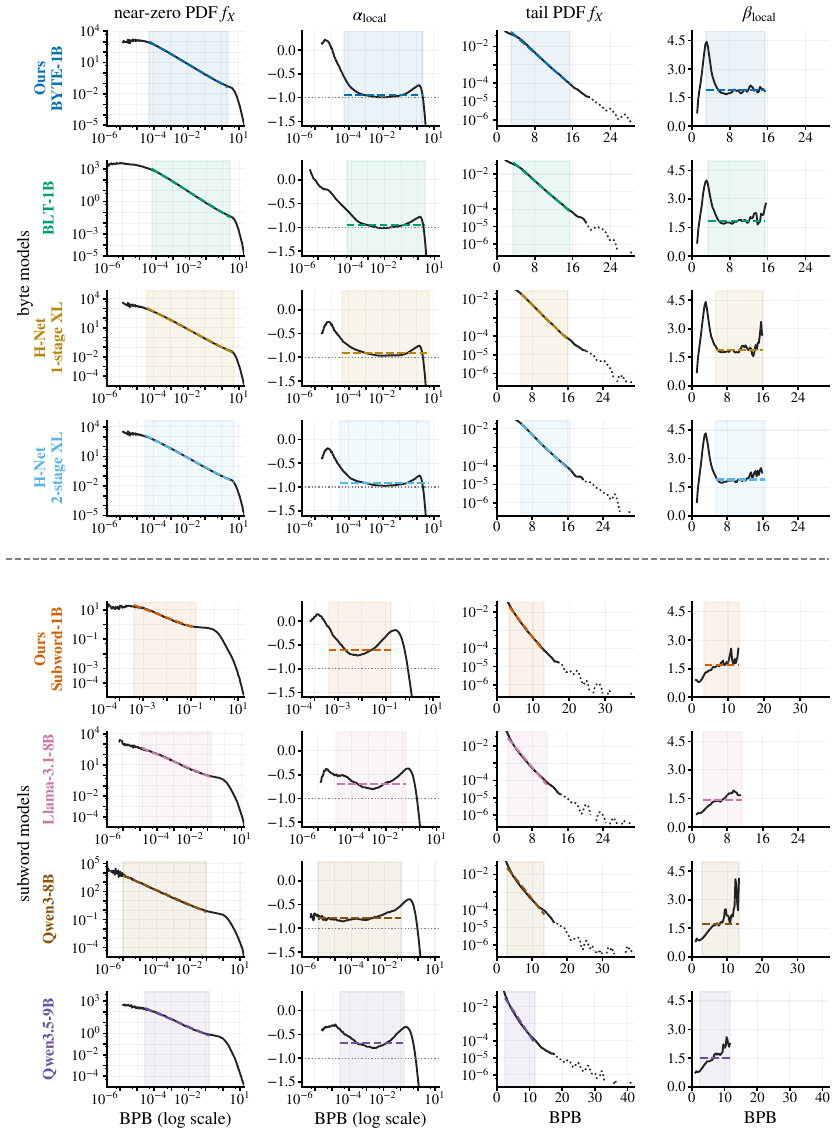}
  \caption{Near-zero power laws and exponential tail cores describe
    finite BPB ranges across models. All panels use byte-weighted distributions.
    Dashed lines show fits, and shading
    marks fitting ranges. Local exponents $\alpha_{\rm local}$ and decay
    scales $\beta_{\rm local}$ indicate where the approximations hold.
    Dotted tail segments indicate sparse data. Fitting details are in
    Appendix~\ref{app:bpb-protocol}.}
  \label{fig:bpb-fits}
\end{figure}

\subsection{BPB aggregation ablations}
\label{app:bpb-aggregation}

We test the sensitivity of BPB aggregation to tokenizer choice using
\byte{}-1B and BLT-1B. For each model, we average BPB over complete spans
defined by the \subword{}, Qwen3, and Qwen3.5 tokenizers and assign
the span mean to each constituent byte. The aggregated distributions
are similar across the three tokenizers (Figure~\ref{fig:bpb-aggregation}).

\paragraph{Uniform four-byte grouping.}
To test whether average span length alone explains the aggregated
distribution, we also use non-overlapping four-byte groups, close to \subword{}'s average
token length in bytes. For both byte models, uniform grouping
reduces variation but leaves more near-zero mass and less far-tail mass
than tokenizer grouping (Figure~\ref{fig:bpb-aggregation}). These
differences highlight the role of tokenizer boundaries in shaping the
aggregated BPB distribution.

\paragraph{Tokenizers share most boundaries.}
On the evaluation dataset, we find high pairwise boundary Jaccard
similarity among the \subword{}, Llama, Qwen3, and Qwen3.5 tokenizers
(Figure~\ref{fig:tokenizer-agreement}). These similar boundaries group
largely the same bytes together, helping explain the consistency of
BPB aggregation across tokenizers.

\begin{figure}[!htb]
  \centering
  \includegraphics[width=\linewidth]{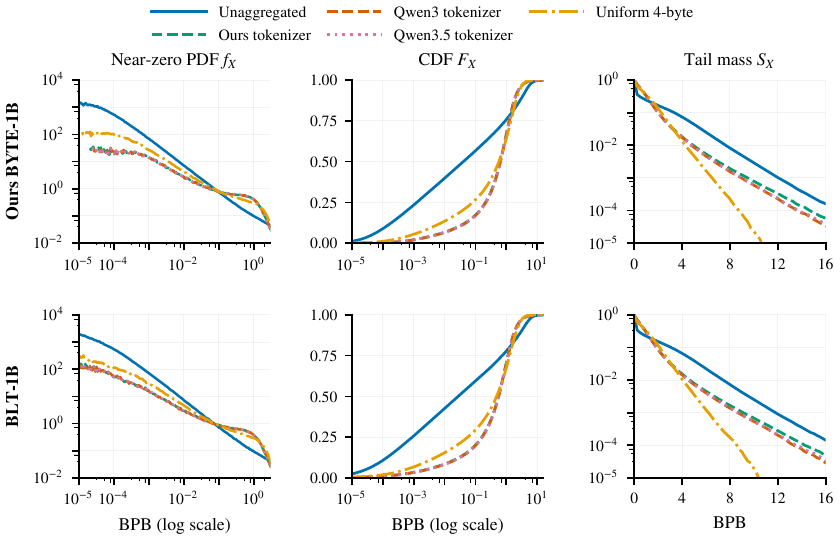}
  \caption{Token-span averaging reduces near-zero and high-BPB mass
    for both \byte{}-1B and BLT-1B. All panels are byte-weighted.
    Tokenizers yield similar
    distributions; uniform 4-byte grouping leaves more near-zero mass
    and less far-tail mass (Appendix~\ref{app:bpb-aggregation}).}
  \label{fig:bpb-aggregation}
\end{figure}

\begin{figure}[!htb]
  \centering
  \includegraphics[width=.44\linewidth]{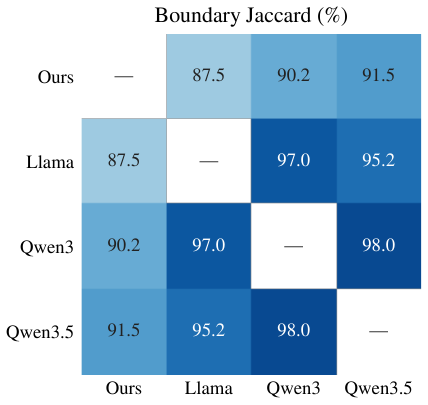}
  \caption{Tokenizers share most boundaries, consistent with their
    similar aggregation curves. Values are boundary Jaccard
    similarities (\%).}
  \label{fig:tokenizer-agreement}
\end{figure}

\newpage
\section{Draft Acceptance and Accepted Text}
\label{app:drafting}

\subsection{Models and evaluation settings}
\label{app:draft-settings}

The 200M, 16-layer drafts and the 1B targets come from the main model
suite, with the dense configurations in Table~\ref{tab:config} and
training settings in Appendix~\ref{app:training}. The 50M drafts use
8-layer models from the IsoFLOPs sweep.

We evaluate documents from the validation dataset in
Appendix~\ref{app:evaluation-data} within each family's native context
window, 8,192 byte tokens or 2,048 subword tokens, respectively.

\subsection{Oracle simulation and acceptance metrics}
\label{app:draft-measurement}

To isolate draft--target agreement from the strategy used to construct
draft candidates, we use an oracle simulation on the given evaluation
text. It directly measures consecutive agreements, setting aside
choices such as draft-tree expansion, pruning, and stopping heuristics.

Both models condition on the preceding
tokens of this text at each scored token (teacher forcing).
A draft token is counted as accepted when the two models select the
same token, including when it differs from the reference token.
The acceptance rate is the fraction of scored tokens accepted.
The following step always uses the reference continuation as
context. Each verification step ends at the first rejection or document
boundary and counts as one simulated target forward pass, with no
proposal-length cap.

The mean accepted draft length is the number of accepted draft tokens
per simulated target forward pass. To compare text coverage across
granularities, we also report accepted bytes per forward, summing the
decoded byte lengths of the accepted draft tokens. Both averages
include verification steps with no accepted tokens. Special tokens
contribute no bytes; correction and bonus outputs are excluded.

Related work uses similar diagnostics of draft--target agreement.
\citet{mamou2024dynamic} let the draft continue until the first
rejection, identifying the maximum accepted draft length at each
iteration without choosing a fixed lookahead or a stopping heuristic.
Their oracle follows generated histories, whereas ours measures
agreement on the given evaluation text.

EvoESAP \citep{liu2026evoesap} likewise evaluates acceptance on fixed
reference prefixes using teacher forcing. It averages the expected
single-token acceptance computed from the overlap between draft and
target next-token distributions. Our diagnostic measures consecutive
agreements and their byte coverage.

Figure~\ref{fig:draft-predictability} shows high acceptance rates in
the lowest-BPB intervals for both families. A much larger share of
\byte{} tokens lies in these intervals, and most of them are accepted.
Together with the longer byte acceptance lengths in
Table~\ref{tab:draft}, this supports using small drafts to supply many
low-BPB byte continuations for joint target verification.

\begin{figure}[!t]
  \centering
  \includegraphics[width=\linewidth]{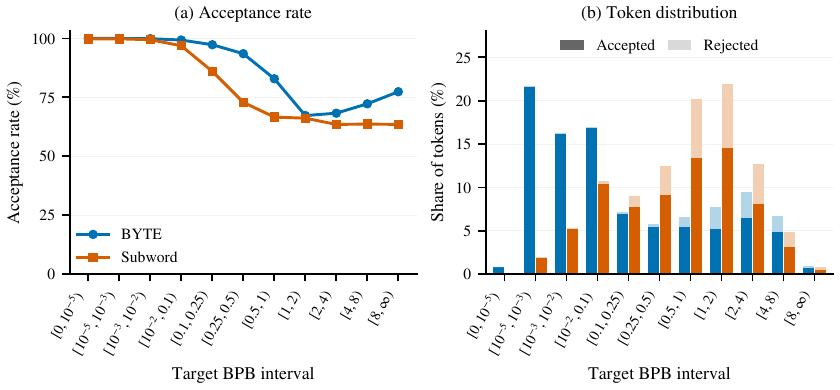}
  \caption{Small drafts achieve high acceptance rates on low-BPB byte tokens.
    Both families use 16-layer drafts and same-family 1B
    targets with verification on the given evaluation text
    within native windows.
    Left: acceptance rate within each target-BPB interval. Right: target-BPB
    distributions split into accepted (dark) and rejected (light) tokens,
    with each native token weighted equally. Colors identify model families.}
  \label{fig:draft-predictability}
\end{figure}

\paragraph{Implications for speculative decoding.}
\label{app:draft-interpretation}
The drafting experiment provides a proof of concept for computation
allocation at inference.
Longer byte agreement runs indicate that small drafts can match their
targets across many consecutive byte tokens in reference text. These
runs provide a teacher-forced diagnostic of drafting opportunities.
Generated histories may differ from the reference text, so these
lengths do not provide a general upper bound on accepted draft length
during generation. In speculative decoding
\citep{leviathan2023speculative}, accepted tokens become part
of the subsequent context, and each round includes correction or bonus
outputs. Establishing practical speedups therefore requires decoding
along generated histories with finite proposals and measuring the
costs of drafting and target verification.

\end{document}